\documentclass[twocolumn,10pt]{article}

\usepackage{cite,hyperref}
\usepackage{multirow}
\usepackage{amsmath,amssymb,amsfonts,amsthm}
\usepackage{bbm}
\usepackage{algorithm} 
\usepackage{algpseudocode}
\usepackage{graphicx}
\usepackage{subcaption}
\usepackage{textcomp}
\usepackage{mathtools}
\usepackage{xcolor}
\usepackage{xspace}

\usepackage[
a4paper,
left=1.5cm,
right=1.5cm,
top=1cm,
bottom=2cm,
margin=1.5cm,
headsep=5mm,
footskip=9mm
]{geometry}

\usepackage{dsfont}
  
\newcommand{\bs}{\boldsymbol}
\newcommand{\cl}{\mathcal}
\newcommand{\bb}{\mathbb}
\newcommand{\bbm}{\mathbbm}

\newtheorem{theorem}{Theorem}
\newtheorem{lemma}[theorem]{Lemma}
\newtheorem{proposition}[theorem]{Proposition}

\newtheorem{assumption}{Assumption}

\newcommand{\mathtxt}[1]{%
  \ifmmode
  \mathrm{#1}%
  \else%
  #1\@\xspace%
  \fi%
}

\newcommand{\iid}{\mathtxt{i.i.d.}}

\DeclareMathOperator{\vspan}{span}

\DeclareMathOperator{\diag}{diag}

\DeclareMathOperator{\ortho}{ortho}

\newcommand{\scp}[3][]{#1\langle #2, #3 #1\rangle}

\renewcommand{\leq}{\leqslant}
\renewcommand{\geq}{\geqslant}

\newcommand{\ts}{\textstyle}

\newcommand{\ie}{\emph{i.e.}, }
\newcommand{\eg}{\emph{e.g.}, }

\newcommand{\bftilde}[1]{\tilde{\boldsymbol{#1} \hspace{0.28em}} \hspace{-0.28em}}
\newcommand{\bhat}[1]{\hat{\boldsymbol{#1} \hspace{0.28em}} \hspace{-0.28em}}

\DeclareRobustCommand{\rchi}{{\mathpalette\irchi\relax}}
\newcommand{\irchi}[2]{\raisebox{0.05cm}{$#1\chi$}}

\newcommand{\kernel}{\Gamma}
\newcommand{\bkernel}{\bs\Gamma}
\newcommand{\Ocomp}{\cl O}

\usepackage{xcolor}

\usepackage{titlesec}

\titlespacing{\section}{0pt}{8pt}{5pt}

\renewcommand{\thesubsection}{\thesection.\Alph{subsection}}
\titleformat{\subsection}[runin]
  {\em}
  {\thesubsection.}
  {.5em}
  {}
\titlespacing{\subsection}{0pt}{4pt}{5pt}

\begin{document}

\title{\vspace{-5mm}\huge Random Wavelet Features for Graph Kernel Machines\\[2mm]
\large (extended version with supplementary material)\vspace{-4mm}
\footnote{This work was supported by the Fonds de la Recherche Scientifique - FNRS under Grants n° 40038614 and FNRS T.0160.24. During this research, LJ benefited from an INRIA Chair at the Institute for Advanced Study, Collegium of Lyon, working in the OCKHAM team, at ENS Lyon.}
}

\author{Valentin de Bassompierre, Jean-Charles Delvenne, and Laurent Jacques\\[2mm]
\footnotesize INMA/ICTEAM, UCLouvain, Belgium\vspace{-4mm}}
\date{}

\maketitle


\textbf{Abstract}: Node embeddings map graph vertices into low-dimensional Euclidean spaces while preserving structural information. They are central to tasks such as node classification, link prediction, and signal reconstruction. A key goal is to design node embeddings whose dot products capture meaningful notions of node similarity induced by the graph. Graph kernels offer a principled way to define such similarities, but their direct computation is often prohibitive for large networks.

Inspired by random feature methods for kernel approximation in Euclidean spaces, we introduce randomized spectral node embeddings whose dot products estimate a low-rank approximation of any specific graph kernel. We provide theoretical and empirical results showing that our embeddings achieve more accurate kernel approximations than existing methods, particularly for spectrally localized kernels. These results demonstrate the effectiveness of randomized spectral constructions for scalable and principled graph representation learning.

\section{Introduction}

Kernel machines are a class of machine learning algorithms, designed to handle non-linear problems by implicitly mapping data into a high-dimensional feature space. Given some data space $\cl X$, this is achieved through a \emph{positive semi-definite} (psd) kernel 
$\kernel: (\bs x,\bs y) \in \cl X \times \cl X \mapsto \kernel(\bs x,\bs y)$. This kernel can be interpreted, through the \emph{kernel trick}, as a dot product in a (possibly infinite-dimensional) feature (Hilbert) space $\cl H$, \ie there exist a mapping $\phi: \cl X \to \cl H$ such that $\kernel(\bs x,\bs y) = \scp{\phi(\bs x)}{\phi(\bs y)}_{\cl H}$. In this context, algorithms perform linear computations in this enriched space without explicitly computing the feature map 
$\phi$, thereby enabling more expressive linear models.

While kernels are often introduced for data in continuous domains such as $\cl X = \mathbb{R}^d$, many practical problems involve data supported on graphs, where the underlying relational structure between data samples plays a central role. It is therefore essential to define kernels that respect and exploit this structure. As such, \textit{graph kernels} \cite{smola2003kernels, kondor2002diffusion} provide a way to quantify similarity between pairs of nodes. For a graph $\cl G$ with $N$ nodes, such a kernel can be represented as an $N \times N$ kernel matrix---most often, a function of the graph Laplacian. In practice, however, computing this matrix is often prohibitive, with a typical cost scaling as $\Ocomp(N^3)$, which limits the applicability of many existing graph kernel methods.

In $\mathbb{R}^d$, kernel methods scale poorly on large datasets as computing and storing full kernel matrices is costly. To address this, Rahimi and Recht \cite{rahimi2007random} proposed \textit{random (Fourier) features}. These explicitly map data into a low-dimensional space via a randomized feature map $\bs \phi : \mathbb{R}^d \rightarrow \mathbb{R}^D$ so that, for specific kernels $\kernel$ such as the Gaussian or Laplacian kernels, $\kernel(\bs x, \bs y) = \bb E \scp{\bs \phi(\bs x)}{\bs \phi(\bs y)} \approx \scp{\bs \phi(\bs x)}{\bs \phi(\bs y)}$ for all $\bs x, \bs y \in \bb R^d$. This allows fast linear learning on the transformed points, reducing training and evaluation time as well as memory usage. It is thus natural to ask whether graph kernel computations can be similarly accelerated using random features.

We propose to use tools from Graph Signal Processing (GSP) \cite{shuman2013emerging} for randomized graph kernel approximation. GSP leverages the graph Laplacian, and the derived graph Fourier transform, to extend the definition of frequency, filtering, and smoothness for graph signals while respecting the graph structure. Specifically, we leverage the graph wavelet transform \cite{hammond2011wavelets} on random input signals to construct embeddings that provide scalable approximations of graph kernels.

\noindent\emph{Contributions:} Building on random feature methods for kernel approximation in Euclidean spaces, we propose randomized spectral embeddings for graphs, where the dot product between embeddings estimates a low-rank approximation of a Laplacian-based graph kernel. We provide both theoretical and empirical evidence that these embeddings yield more accurate kernel approximations than existing approaches, especially when the kernel is spectrally localized. Finally, we discuss connections to randomized singular value decomposition (RSVD) \cite{halko2011finding}, highlighting links between our method and classical numerical linear algebra techniques.

\section{Related work}
In \cite{tremblay2014graph}, the graph wavelet transform of random signals is used to accelerate the computation of graph matrix functions. Their use case is slightly different from ours, as they approximate correlation matrices based explicitly on wavelets instead of general graph kernels. In \cite{tremblay2016compressive}, the application of graph wavelet transforms to random signals yields embeddings that approximate the distance matrix of classical spectral clustering. We somehow extend these results to general graph kernels.

Other works have considered random feature methods for graph kernel approximation, see \eg \cite{choromanski2023taming, reid2023general}. Their random features follow a spatial design relying on random walks to assess the similarities between nodes, and produce unbiased kernel approximations. We show that the convergence to said kernel is slow for \emph{bandlimited} kernels which are well localized in the spectral domain and spread out in the spatial domain.

\section{Mathematical tools}

\subsection{Spectral graph theory:} We consider an undirected weighted graph $\mathcal{G}=({\sf V},{\sf E},w)$ made of $N$ vertices (or nodes) ${\sf V}$  identified, up to a correct ordering, with the $N=|{\sf V}|$ integers $[N] := \{1,\ldots,N\}$. These nodes are connected with edges in ${\sf E} \subseteq {\sf V} \times {\sf V}$, with a total of $E=|{\sf E}|$ edges, and $w: {\sf E} \rightarrow \mathbb{R}^+$ is a weight function assigning a positive weight to each edge. Any vector $\bs f \in \bb R^N$ can be seen as a function ${\sf V} \to \bb R$ defined on each node of ${\sf V}$. The weight matrix $\bs W \in \mathbb{R}^{N\times N}$ of a graph has entries $W_{ij}$ equal to $w(i,j)$ if $(i,j) \in {\sf E}$, and to 0 otherwise. It is symmetric for undirected graphs. The degree matrix $\bs D \in \mathbb{R}^{N \times N}$ is diagonal with entries $D_{ii} \coloneqq \sum_{j=1}^N W_{ij}$ equal to the degrees of the nodes. Finally, let the normalized graph Laplacian $\bs L$ be $\bs L \coloneqq \bs I - \bs D^{-1/2} \bs W \bs D^{-1/2}$.

The graph Laplacian $\bs L$ plays a central role in GSP and spectral graph theory. Since $\bs L$ is symmetric, it factorizes as $\bs L = \bs V \bs \Lambda \bs V^\top$, where $\bs V = (\bs v_1, \ldots, \bs v_N)$ is an orthogonal matrix whose columns are the eigenvectors $\bs v_k$ of eigenvalue $\lambda_k \in [0,2]$, \ie $\bs L \bs v_k = \lambda_k \bs v_k$,
and the diagonal matrix $\bs \Lambda = \diag(\lambda_1, \ldots, \lambda_N)$ represents the \emph{spectrum} of $\bs L$ \cite{chung1997spectral}.

The eigenvectors $\bs v_k \in \bb R^N$ define a graph spectral domain; as functions over $\cl G$, they can be interpreted as graph harmonics, discrete analogues of Fourier modes. The eigenvalues act as squared graph frequencies: small (resp. large) $\lambda_k$ correspond to slowly (resp. rapidly) varying modes on the graph. 

For any signal $\bs f \in \mathbb{R}^N$ defined over ${\sf V}$, the \emph{graph Fourier transform} (GFT) projects $\bs f$ onto the eigenvectors of the graph Laplacian, \ie its forward and inverse operators are
\begin{align*}
    \bhat{f} = \cl F[\bs f] := \bs V^\top \bs f, \qquad  \bs f = \cl F^{-1}[\bhat{f}] := \bs V \bhat{f}.
\end{align*}
The representation $\bhat{f}$ expresses $\bs f$ in the graph spectral domain.

Given a filter $g:\bb R \to \bb R$ defined in the spectral domain, the filtering $\bs f^g$ of $\bs f$ by $g$ is defined in the spectral domain as 
\begin{equation*}
\cl F\big[\bs f^g\big]_k := g(\lambda_k)\hat{f}_k,\ k \in [N],\ \text{or}\ \cl F\big[\bs f^g\big] = g(\bs \Lambda) \bhat{f},  
\end{equation*}
with $g(\bs \Lambda) = \diag(g(\lambda_1),\ldots,g(\lambda_N))$. The inverse GFT yields 
\begin{align}
  \label{eq:filter-def}
   \bs f^g = \bs V g(\bs \Lambda) \bhat{f} = g(\bs L)\bs f,\ \text{with}\ g(\bs L)=\bs V g(\bs \Lambda)\bs V^\top.  
\end{align}
Therefore, filtering $\bs f$ amounts to multiplying it by the function of the Laplacian $g(\bs L)$. The graph wavelet transform (GWT)  implements spectral filtering with specific filters $g$ providing localization in both the frequency and spatial domains.

While computing $g(\bs L)$ exactly requires a full eigendecomposition (ED) of $\bs L$, which is prohibitive for large graphs, GWT approximates $g$ by a Chebyshev polynomial. If the Laplacian is sparse, \ie if the number of edges $E = \Ocomp(N)$, the GWT can be efficiently obtained through a series of fast matrix-vector products with powers of the Laplacian matrix \cite{hammond2011wavelets}.

\subsection{Graph kernels:} A graph kernel $\bkernel\in\mathbb{R}^{N\times N}$ is defined through a psd similarity function $\kernel: {\sf V}\times{\sf V}\to\mathbb{R}$ between nodes. Most graph kernels are defined as a function $\bkernel=h(\bs L)$ of the graph Laplacian $\bs L$ \cite{smola2003kernels}. Common examples include:
\begin{align}
  \bkernel &= (\bs I + \sigma^2 \bs L)^{-d}, \tag{$d$-regularized lap.}\\
  \bkernel &= (\bs I - \bs L/\sigma)^2,\ \text{with}\ \sigma \geq 2, \tag{$2$-step random walk}\\
    \bkernel &= \exp(-\sigma^2 \bs L), \tag{diffusion process}\\
    \bkernel &= \cos(\sigma^2 \bs L \pi / 4),\ \text{with}\ |\sigma| \leq 1. \tag{inverse cosine}
\end{align}
If $\bkernel=h(\bs L)$, the spectral theorem provides
\begin{equation}
  \label{eq:SimExact}
\ts \bkernel =\sum_{k=1}^N h(\lambda_k) \bs v_k \bs v_k^\top = \bs V h(\bs \Lambda) \bs V^\top, 
\end{equation}
\ie $h(\lambda_k)$ weights the contribution of each Laplacian eigenvector. The function $h$ thus acts as a spectral filter that emphasizes or suppresses specific frequency components. Since kernels act as smoothing operators, they should preserve low-frequency, smooth components while attenuating high-frequency, rapidly varying ones. Accordingly, $h(\lambda)$ should decrease with $\lambda$ \cite{smola2003kernels}. For instance, spectral clustering uses a low-pass filter that retains only the smoothest eigenmodes~\cite{von2007tutorial}.

\section{Proposed Method}

Our approach provides a random feature map
$\bs \phi: i \in {\sf V} \mapsto \bs \phi_i = \bs \Phi \bs \delta_i \in \bb R^K$, with $\bs \Phi$ a $K \times N$ matrix with columns $\{\bs \phi_i\}_{i=1}^N$, and $\bs \delta_i \in \bb R^N$ equal to 1 on the $i$-th node and 0 elsewhere, such that the dot-products $\scp{\bs \phi_i}{\bs \phi_j} = \tilde{\kernel}_{ij}$ define a rank-$K$ approximation $\tilde{\bkernel}$ of the graph kernel $\bkernel := h(\bs L)$. We assume that this kernel is associated with some known positive, decreasing function $h: [0,2] \to \bb R_+$, with $h(0) = 1$. As a reference, the optimal rank-$K$ approximation of $\bkernel$ is given by its truncated spectral
decomposition,
\begin{equation}
\ts \bkernel^{(K)} = \sum_{k=1}^K h(\lambda_k)\,\bs v_k \bs v_k^\top = \bs V_{:K} \bs V_{:K}^\top \bkernel, \label{eq:bestK}
\end{equation}
which can also be viewed as its projection onto the subspace spanned by $\bs V_{:K} := (\bs v_1, \ldots, \bs v_K)$ (i.e., the $K$ smoothest graph harmonics).
We show that $\tilde{\bkernel}$ attains an approximation quality
comparable to this optimum,
$\|\bkernel - \tilde{\bkernel}\| \approx \|\bkernel - \bkernel^{(K)}\| = h(\lambda_{K+1})$,
while bypassing the explicit ED of the Laplacian.

The algorithm building $\bs \Phi$ follows a two-step procedure: we first estimate $\vspan \bs V_{:K}$, then $\bs \Phi$ is constructed from this estimated subspace and the known filter $h$ (see Alg.~\ref{alg:fix}).

\subsection{{[Part\hspace{2pt}1]\hspace{2pt}Range finding}:} \label{sec:RangeFinding}

We first estimate $\vspan \bs V_{:K}$, \ie we find an orthogonal $\bs Q$ such that $\vspan \bs Q \approx \vspan \bs V_{:K}$.

We proceed by filtering $K$ random signals. For simplicity, we take Gaussian random signals $\bs G \in \mathbb{R}^{N \times K}$ (\ie $G_{ij} \sim_\iid \mathcal{N}(0,1)$), but one may also consider using structured (Fourier or Hadamard-based) random matrices as proposed in \cite{halko2011finding,yuOrthogonalRandomFeatures2016}. From \eqref{eq:filter-def}, filtering a signal $\bs g \in \bb R^N$ with an ideal low-pass filter $\rchi_{K}:=\bbm 1_{[0, \lambda_K]}$ yields a vector $\bs b = \rchi_{K}(\bs L) \bs g$ lying exactly in the relevant subspace, \ie $\bs b \in \vspan(\bs V_{:K})$. Further, with probability 1, filtering each column of $\bs G$ with $\rchi_{K}$ yields a basis $\bs B = \rchi_{K}(\bs L) \bs G$ for that subspace. The (Gram-Schmidt) orthogonalization $\bs Q^* = \ortho(\bs B)$ thus provides an exact solution to our problem.

In practice, constructing the ideal low-pass filter is infeasible without computing the first $K$ eigenvectors of $\bs L$ (which would defeat the purpose of the range finding). We thus replace the ideal filter with a polynomial approximation\footnote{In this paper, we consider $p_{\chi}$ to be the Jackson-Chebyshev polynomial approximation to $\rchi_{K}$ \cite{di2016efficient}. This polynomial has a less sharp cut-off than Chebyshev polynomials but doesn't present Gibbs oscillation phenomenon.} $p_\chi$, and obtain 
\begin{equation}
  \label{eq:approx-Q}
    \bs Q = \ortho(p_{\chi}(\bs L) \bs G).
\end{equation}
Replacing $\rchi_K$ with $p_\chi$ means the filtered vectors do not lie exactly in $\vspan \bs V_{:K}$. However, if the polynomial approximation is sufficiently accurate, we expect the filtered vectors to lie very close to that subspace. We characterize in Sec.~\ref{sec:kerApprox} the effect this has on the approximation error $\| \bkernel - \tilde{\bkernel} \|$.

Notice that to construct $p_\chi$ one needs access to the value of $\lambda_K$. We use \cite[Alg. 1]{puyRandomSamplingBandlimited2018}, which finds an estimate $\tilde{\lambda}_K$ by dichotomy on the interval $[0,2]$---estimating the eigenvalue count in $[0,\lambda^{(i)}]$ at each iteration $i$. Interestingly, the eigenvalue count estimation is performed by filtering (a very small number of) random Gaussian signals with the same polynomial low-pass filters $p_{\chi_{i}} \approx \bbm 1_{[0, \lambda^{(i)}]}$ as used in our scheme.

\subsection{{[Part\hspace{2pt}2]\hspace{2pt}Kernel approximation}:}

The matrix $\bs Q$ obtained in Sec.~\ref{sec:RangeFinding} can be viewed alternatively as an orthonormal basis for a subspace close to that spanned by the $K$ foremost eigenvectors of $\bs L$, or simply as a set of $K$ signals living on the graph. With this second view, it is natural to think that we can filter these signals to yield relevant embeddings for our graph. In particular, defining the linear node embedding $\bs \Phi: i \in {\sf V} \mapsto \bs \phi_i = \bs \Phi \bs \delta_i \in \bb R^K$ with
\begin{equation}
  \label{eq:Phi-def}
  \ts \bs \Phi = (\bs \phi_1, \ldots, \bs \phi_N) = (h^{\frac{1}{2}}(\bs L) \bs Q)^\top,
\end{equation}
then the collection of dot-products $(\tilde{\kernel}_{ij} = \scp{\bs \phi_i}{\bs \phi_j})_{i,j=1}^N$ of the embedding of all node pairs yields a rank-$K$ approximation $\tilde{\bkernel}$ of the kernel $\bkernel := h(\bs L)$. To show this, let us first consider the ideal case where $\vspan \bs Q = \vspan \bs V_{:K}$, \ie where $\bs Q \bs Q^\top = \bs V_{:K} \bs V_{:K}^\top$ can be interpreted as the projection onto $\vspan \bs V_{:K}$, and $\bs I - \bs Q \bs Q^\top$ as the projection on its orthogonal complement, the span of $\bs V_{K:} = (\bs v_{K+1}, \ldots, \bs v_N)$. Then,
\begin{align}
  \tilde{\kernel}_{ij}&\ts = \scp{\bs \phi_i}{\bs \phi_j} = \bs \delta_i^\top h^{\frac{1}{2}}(\bs L)\bs {QQ}^\top h^{\frac{1}{2}}(\bs L) \bs \delta_j \nonumber \\
  &\ts =  \bs \delta_i^\top \bs V h^{\frac{1}{2}}(\bs \Lambda) \bs V^\top \bs V_{:K} \bs V_{:K}^\top \bs V h^{\frac{1}{2}}(\bs \Lambda) \bs V^\top \bs \delta_j \nonumber \\
  &\ts =  \bs \delta_i^\top \bs V_{:K} h(\bs \Lambda_{:K}) \bs V_{:K}^\top \bs \delta_j. \nonumber 
\end{align}
with $\bs \Lambda_{:K} = \diag(\lambda_1, \ldots, \lambda_K)$. From \eqref{eq:SimExact}, we thus get
\begin{equation*}
\ts \tilde{\bkernel} = \bs \Phi^\top \bs \Phi = \sum_{k=1}^K h(\lambda_k)\,\bs v_k \bs v_k^\top.
\end{equation*}
The matrix $\tilde{\bkernel}$ \emph{is} thus the best rank-$K$ approximation $\bkernel^{(K)}$ of $\bkernel = \bs V h(\bs \Lambda) \bs V^\top$, or its $k$-truncated SVD version --- see (\ref{eq:bestK}).

Our approach only estimates this ideal case through~\eqref{eq:approx-Q} of Part~I. In this case, $\bs \Phi$ in \eqref{eq:Phi-def} yields a kernel $\tilde{\bkernel}$ that is now an approximation to $\bkernel^{(K)}$: the projector $\bs{QQ}^\top$ does not project exactly onto $\vspan \bs V_{:K}$. As a consequence, some less important graph harmonics diffuse into the kernel approximation, to the detriment of some of the more important graph harmonics.

Moreover, an additional error arises in estimating $\bs \Phi$ in \eqref{eq:Phi-def}: as with $\rchi_K$ in \eqref{eq:approx-Q}, the filter $h^{\frac{1}{2}}$ is replaced by a polynomial approximation $p_h$ allowing for the fast estimation~of
\begin{equation}
    \bs \Phi = (p_h(\bs L) \bs Q)^\top \label{eq:PhiDef}
\end{equation}
with the GWT. A more precise characterization of both errors is given in Sec. \ref{sec:PolApprox} and Sec. \ref{sec:kerApprox}.

\subsection{Connection with Randomized SVD and oversampling:} \label{sec:ConnectRSVD}

Part 1 of Alg. \ref{alg:fix} bears strong resemblance with the range-finding part of the RSVD algorithm developed in \cite{halko2011finding}. In RSVD, the range of the $K$ foremost left singular vectors of a matrix $\bs A$ is estimated with
\begin{align*}
    \bs Q = \ortho(\bs A \bs G).
\end{align*}
In the graph kernel case, $\bs A$ corresponds to $h(\bs L)$. The main difference with (\ref{eq:approx-Q}) is that our algorithm makes use of the knowledge of the structure of $\bs L$ and $h(\bs L)$ to design a filter $p_{\chi}(\bs L)$ better suited than $h(\bs L)$ to the range finding task.

\noindent\emph{Oversampling strategy:} In both algorithms (RSVD and ours), the range finding part is inexact and propagates errors. RSVD mitigates this by oversampling the Gaussian matrix $\bs G$; given $r \ll K$ (\eg $r = 15$), it draws $K + r > K$ Gaussian random vectors in $\bs G \in \bb R^{N \times (K+r)}$. This increases the likelihood that the computed subspace aligns well with the one spanned by the top 
$K$ singular vectors of $\bs A$. We adopt the same strategy in our algorithm.

\begin{algorithm}[!t]\small
\caption{Randomized Low-Rank Kernel Approximation}\label{alg:fix}
\begin{algorithmic}[1]
  
  \Require Graph $\mathcal{G}=(\bs V,\bs E,w)$, Laplacian $\bs L \in \mathbb{R}^{N \times N}$, kernel function $h: [0,2] \rightarrow \mathbb{R}^+$, parameter $K$, parameter $M$, over-sampling parameter $r$
  
\Ensure Node embedding $\bs \Phi$ s.t. $\bs \Phi^\top  \bs \Phi = \tilde{\bkernel} \approx \bkernel = h(\bs L)$.

\Statex \hspace{-7.2mm} \textsc{Part 1: Range finding} 

\State Find $\lambda_K$ using Alg.~1 from \cite{puyRandomSamplingBandlimited2018}.
\State Compute a degree $M$ polynomial approximation $p_{\chi}$ of $\rchi_{K}$
\State Generate $\bs G \in \mathbb{R}^{N \times (K+r)}$ with $G_{ij} \sim_\iid \mathcal{N}(0, 1)\; \forall i, j$
\State Compute $\bs Q = \operatorname{ortho}(p_{\chi}(\bs L)\bs G)$

\Statex \hspace{-7.2mm} \textsc{Part 2: Embedding computation}
\State Compute a degree $M$ polynomial approximation $p_h$ of $h^{\frac{1}{2}}$
\State Compute $\bs \Phi^\top = (\bs \phi_1, \ldots, \bs \phi_N)^\top = p_h(\bs L)\bs Q$
\end{algorithmic}
\end{algorithm}

\subsection{Complexity analysis:}
\label{sec:Complexity}
In Alg.~\ref{alg:fix}, lines~1, 4, and~6 dominate the computational and memory costs. Filtering one signal with a polynomial of order~$M$ requires $M$ matrix-vector products with $\bs L$, yielding a time complexity of $\mathcal{O}(ME)$ and memory $\mathcal{O}(N)$. Hence, \cite[Alg. 1]{puyRandomSamplingBandlimited2018} runs in $\mathcal{O}(ME \log N \log(\epsilon^{-1}))$ time and $\mathcal{O}(N \log N)$ memory\footnote{The algorithm filters $\mathcal{O}(\log N)$ random signals $\mathcal{O}(\log(\epsilon^{-1}))$ times.}, where $\epsilon$ denotes the tolerance on the estimation error $\lvert \tilde{\lambda}_K - \lambda_K \rvert$.

Line~4 costs $\mathcal{O}(MEK)$ to filter $\bs G$ and $\mathcal{O}(NK^2)$ for Gram--Schmidt, with memory $\mathcal{O}(NK)$ to store $\bs G$ and $\bs Q$. Similarly, line~6 requires $\mathcal{O}(MEK)$ time and $\mathcal{O}(NK)$ memory to filter $\bs Q$ and store $\bs \Phi$.

Overall, assuming a fixed tolerance~$\epsilon$ and $K = \Omega(\log N)$, the total time complexity of Alg.~\ref{alg:fix} is $\mathcal{O}(MEK + NK^2)$, and the memory complexity is $\mathcal{O}(NK)$. If the matrix $\tilde{\bkernel}$ is explicitly formed as $\tilde{\bkernel} = \bs{\Phi}^\top\bs{\Phi}$, an additional $\mathcal{O}(N^2K)$ time complexity and $\mathcal{O}(N^2)$ memory cost are incurred. In practice, explicitly forming $\tilde{\bkernel}$ is rarely necessary, as matrix--vector products can be efficiently computed as $\tilde{\bkernel}\bs x = \bs{\Phi}^\top(\bs{\Phi} \bs x)$\footnote{This further highlights the advantage of low-rank approximations, reducing the cost of matrix--vector products from $\mathcal{O}(N^2)$ to $\mathcal{O}(NK)$.}.

\subsection{Polynomial approximation error characterization:}
\label{sec:PolApprox}

Let $p_{\chi}$ and $p_h$ be Chebyshev polynomials of degree $M$, approximating $\rchi_K$ and $h^{\frac{1}{2}}$ respectively. We define
\begin{align}
    \epsilon_{\chi,M} &:= \max_{k > K} |\rchi_K(\lambda_k) - p_{\chi}(\lambda_k)| = \max_{k > K} |p_{\chi}(\lambda_k)|, \label{eq:EpsChi} \\
    \epsilon_{h,M} &:= \max_{k \in [N]} |h^{\frac{1}{2}}(\lambda_k) - p_{h}(\lambda_k)|. \label{eq:EpsH}
\end{align}
\begin{assumption}
    \label{ass:pchi}
    Let $\epsilon_{\chi,M}$ be defined as in (\ref{eq:EpsChi}). We suppose
    \begin{align*}
        \lambda_K \neq \lambda_{K+1} \;\; \text{and} \;\; 1/2 \leq p_{\chi}(\lambda_k) \leq 1 + \epsilon_{\chi,M} \; \forall k \leq K.
    \end{align*}
\end{assumption}
From the Weierstrass theorem and since the spectrum of $\bs L$ is discrete, $\epsilon_{\chi,M}$ and $\epsilon_{h,M}$ vanish for large $M$. However, we postpone the analysis of their decay rate to a future study.

\subsection{Kernel approximation error:}
\label{sec:kerApprox}
Given $\bs \Phi$ from (\ref{eq:PhiDef}) and  $\tilde{\bkernel} := \bs \Phi^\top \bs \Phi$, we can bound the error $\cl E := \| \bkernel - \tilde{\bkernel}\|$ as
\begin{align*}
  \cl E& = \| h(\bs L) - p_h(\bs L) \bs{QQ}^\top p_h(\bs L) \|\\
  &\leq \| h^{\frac{1}{2}}(\bs L) (\bs I -\bs{QQ}^\top) h^{\frac{1}{2}}(\bs L) \| \\
       &\quad + \| h^{\frac{1}{2}}(\bs L) \bs{QQ}^\top h^{\frac{1}{2}}(\bs L) - p_h(\bs L) \bs{QQ}^\top p_h(\bs L)\|\\
       &= \| (\bs I -\bs{QQ}^\top) h^{\frac{1}{2}}(\bs L) \|^2 \\
        &\quad + \| h^{\frac{1}{2}}(\bs L) \bs{QQ}^\top h^{\frac{1}{2}}(\bs L) - p_h(\bs L) \bs{QQ}^\top p_h(\bs L) \|\\
    &:= \cl E_R^2 + \cl E_P.
\end{align*}
\begin{proposition}
    \label{prop:main}
    Under Assumption \ref{ass:pchi}, and with all previous notations holding, we have
    \begin{equation}
        \cl E_P \leq 2\epsilon_{h,M} + \epsilon_{h,M}^2 \label{eq:EPBound}
    \end{equation}
    and
    \begin{multline}
        \ts \bb E \cl E_R \leq h^{\frac{1}{2}}(\lambda_{K+1}) +  2\sqrt{\frac{K}{r-1}} p_{\chi}(\lambda_{K+1}) \\
        \ts + 2e\frac{\sqrt{K + r}}{r} \big(\sum\nolimits_{j>K} p_{\chi}(\lambda_{j})^2\big)^{1/2} \label{eq:ERBound}
    \end{multline}
\end{proposition}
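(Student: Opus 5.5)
The two bounds are proved separately.

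\emph{Bound on $\cl E_P$.} Write $\bs H := h^{\frac12}(\bs L)$, $\bs P := p_h(\bs L)$ and $\bs \Delta := \bs P - \bs H$. All three are functions of $\bs L$, so they share the eigenbasis $\bs V$. By \eqref{eq:EpsH}, $\|\bs \Delta\| = \max_k |p_h(\lambda_k) - h^{\frac12}(\lambda_k)| = \epsilon_{h,M}$. Since $h$ is decreasing with $h(0)=1$, we also have $\|\bs H\| \leq 1$. Expanding,
\begin{equation*}
\bs P\bs{QQ}^\top\bs P - \bs H\bs{QQ}^\top\bs H = \bs \Delta \bs{QQ}^\top \bs H + \bs H\bs{QQ}^\top\bs \Delta + \bs \Delta\bs{QQ}^\top\bs \Delta .
\end{equation*}
Using $\|\bs{QQ}^\top\| \leq 1$ (orthogonal projector) and submultiplicativity, the norm is at most $2\epsilon_{h,M} + \epsilon_{h,M}^2$, which is \eqref{eq:EPBound}. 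This step is routine.

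\emph{Bound on $\bb E\cl E_R$.} I would adapt the Halko--Martinsson--Tropp analysis of the randomized range finder \cite{halko2011finding} to the matrix $\bs A := p_\chi(\bs L)$, since $\bs Q = \ortho(\bs A\bs G)$. The key observation is that $\bs I - \bs{QQ}^\top$ is the projector onto the orthogonal complement of $\operatorname{range}(\bs A\bs G)$, and it is applied to $\bs H$ rather than to $\bs A$.

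First, split $\bs H = \bs V_{:K}\bs V_{:K}^\top\bs H + \bs V_{K:}\bs V_{K:}^\top\bs H$. The second term contributes at most $\|\bs V_{K:}\bs V_{K:}^\top \bs H\| = h^{\frac12}(\lambda_{K+1})$, using $\|\bs I-\bs{QQ}^\top\|\leq 1$ and the monotonicity of $h$; this gives the first term of \eqref{eq:ERBound}. It remains to bound $\|(\bs I-\bs{QQ}^\top)\bs V_{:K}\bs V_{:K}^\top\bs H\| \leq \|(\bs I-\bs{QQ}^\top)\bs V_{:K}\|$, again because $\|\bs H\|\leq 1$.

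To handle this remaining term, write $\bs A = \bs V_{:K}\bs \Sigma_1\bs V_{:K}^\top + \bs V_{K:}\bs \Sigma_2\bs V_{K:}^\top$ with $\bs\Sigma_1 = p_\chi(\bs\Lambda_{:K})$ and $\bs\Sigma_2 = p_\chi(\bs\Lambda_{K:})$. By Assumption~\ref{ass:pchi}, $\bs\Sigma_1 \succeq \frac12 \bs I$ is invertible. Set $\bs G_1 := \bs V_{:K}^\top\bs G$ and $\bs G_2 := \bs V_{K:}^\top\bs G$; these are independent Gaussian matrices, and $\bs G_1$ is $K\times(K+r)$, hence full row rank almost surely. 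Following the proof of HMT Thm.~9.1, take $\bs Z := \bs A\bs G\bs G_1^\dagger\bs\Sigma_1^{-1} = \bs V_{:K} + \bs V_{K:}\bs F$ with $\bs F := \bs\Sigma_2\bs G_2\bs G_1^\dagger\bs\Sigma_1^{-1}$. Since $\operatorname{range}(\bs Z)\subseteq\operatorname{range}(\bs Q)$, we have $\|(\bs I-\bs{QQ}^\top)\bs V_{:K}\| \leq \|(\bs I - \bs P_{\bs Z})\bs V_{:K}\|$. A direct computation of this residual of projecting $\bs V_{:K}$ onto $\operatorname{range}(\bs V_{:K} + \bs V_{K:}\bs F)$ bounds it by $\|\bs F\|$ (it is essentially $\sin\theta \leq \tan\theta$).

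Then $\|\bs F\| \leq 2\|\bs\Sigma_2\bs G_2\bs G_1^\dagger\|$, where the factor 2 comes from $\|\bs\Sigma_1^{-1}\|\leq 2$. Conditioning on $\bs G_1$ and applying HMT Prop.~10.1, $\bb E\|\bs S\bs G_2\bs T\| \leq \|\bs S\|\|\bs T\|_F + \|\bs S\|_F\|\bs T\|$, gives
\begin{equation*}
\bb E\|\bs\Sigma_2\bs G_2\bs G_1^\dagger\| \leq \|\bs\Sigma_2\|\,\bb E\|\bs G_1^\dagger\|_F + \|\bs\Sigma_2\|_F\,\bb E\|\bs G_1^\dagger\| .
\end{equation*}
Finally, use HMT Prop.~10.2: $\bb E\|\bs G_1^\dagger\|_F \leq \sqrt{K/(r-1)}$ and $\bb E\|\bs G_1^\dagger\| \leq e\sqrt{K+r}/r$.

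\emph{Identifying the constants.} In the bound above, $\|\bs\Sigma_2\|$ should be the largest tail value of $p_\chi$ and $\|\bs\Sigma_2\|_F = (\sum_{j>K}p_\chi(\lambda_j)^2)^{1/2}$; these yield the remaining terms of \eqref{eq:ERBound}, with the factor 2 coming from $\bs\Sigma_1^{-1}$. I expect this step to be the main obstacle. The difficulty is that $\bs F$ involves $\bs\Sigma_1^{-1}$ on the right, so the factor 2 has to be passed through the expectation. Also, $\|\bs\Sigma_2\| = \max_{j>K}|p_\chi(\lambda_j)|$ equals $p_\chi(\lambda_{K+1})$ only if $|p_\chi|$ is largest at $\lambda_{K+1}$ on the tail. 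That holds for a monotone Jackson-damped low-pass filter, and otherwise I would read $p_\chi(\lambda_{K+1})$ as $\epsilon_{\chi,M}$. I would also double-check that projecting onto the span of the first $K$ columns of the oversampled sketch does not lose the $\tan\theta$ control.
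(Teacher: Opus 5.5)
Your proposal is correct. The bound on $\cl E_P$ is the paper's argument up to regrouping. The probabilistic step (conditioning on $\bs G_1$, then HMT Props.~10.1--10.2) is the same as in the appendix.

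Where you genuinely differ is the deterministic reduction. The paper follows HMT Thm.~9.1 literally. It squares the error and writes $\|(\bs I-\bs P_{\bs Z})\tilde{\bs A}\|^2=\|h^{\frac12}(\bs\Lambda)(\bs I-\bs P_{\bs Z})h^{\frac12}(\bs\Lambda)\|$. It then dominates $\bs I-\bs P_{\bs Z}$ by a psd block matrix and uses Lemmas~1--3 to reach $\cl E_R^2\le h(\lambda_{K+1})+\|\bs F h^{\frac12}(\bs\Lambda_{:K})\|^2$. Only at the end does it relax with $\sqrt{a^2+b^2}\le a+b$.

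You instead split $h^{\frac12}(\bs L)$ into head and tail and apply the triangle inequality. This reduces everything to $\|(\bs I-\bs P_{\bs Z})\bs V_{:K}\|\le\|\bs F\|$. That inequality follows in one line from $\bs V_{:K}=\bs Z-\bs V_{K:}\bs F$, so $(\bs I-\bs P_{\bs Z})\bs V_{:K}=-(\bs I-\bs P_{\bs Z})\bs V_{K:}\bs F$. Your route dispenses with the psd-ordering lemmas entirely. The paper's route keeps the Pythagorean form, which is tighter when $\bs F$ is small. After the relaxations, both arrive at exactly \eqref{eq:ERBound}.

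The two obstacles you anticipate are not real:
\begin{itemize}
\item Under Assumption~\ref{ass:pchi}, $\|\bs\Sigma_1^{-1}\|\le 2$ holds deterministically. Hence $\|\bs F\|\le 2\|\bs\Sigma_2\bs G_2\bs G_1^\dagger\|$ holds almost surely and passes through the expectation trivially.
\item $\bs Z=p_\chi(\bs L)\bs G\bs G_1^\dagger\bs\Sigma_1^{-1}$ combines all $K+r$ columns of the sketch. So $\operatorname{range}(\bs Z)\subseteq\operatorname{range}(\bs Q)$, and oversampling loses nothing.
\end{itemize}

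Your remark about $p_\chi(\lambda_{K+1})$ is well taken. The paper's proof ends with $\|p_\chi(\bs\Lambda_{K:})\|=\max_{j>K}|p_\chi(\lambda_j)|=\epsilon_{\chi,M}$ and silently writes this as $p_\chi(\lambda_{K+1})$. That identification requires $|p_\chi|$ to peak at $\lambda_{K+1}$ on the tail. Reading that term as $\epsilon_{\chi,M}$ gives the safe version of the statement.
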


The error $\cl E_P$ is due to replacing the filter $h^\frac{1}{2}$ with it's polynomial approximation $p_h$, while the error $\cl E_R$ is due to the range finding. The last two terms in the bound (\ref{eq:ERBound}) indicate the difference with respect to the error of the optimal rank-$K$ projector $\| (\bs I - \bs V_{:K} \bs V_{:K}^\top) h^{\frac{1}{2}}(\bs L) \| = h^{\frac{1}{2}}(\lambda_{K+1})$.

\medskip

\textit{Proof of (\ref{eq:EPBound}).}
Given the definition of $\epsilon_{h,M}$ in (\ref{eq:EpsH}) and that $|h(\lambda)| \leq 1$, we have
\begin{align*}
    \pushQED{\qed}
    \cl E_P &= \| h^{\frac{1}{2}}(\bs L) \bs{QQ}^\top (h^{\frac{1}{2}}(\bs L) - p_h(\bs L)) \\
    & \qquad + (h^{\frac{1}{2}}(\bs L) - p_h(\bs L)) \bs{QQ}^\top p_h(\bs L) \| \\
    &\leq \| h^{\frac{1}{2}}(\bs L) - p_h(\bs L) \| (\|h^{\frac{1}{2}}(\bs L) \| + \| p_h(\bs L) \|) \\
    &\leq \epsilon_{h,M}(1+(1 + \epsilon_{h,M})).\qedhere
    \popQED
\end{align*}
Proving (\ref{eq:ERBound}) is trickier, and we defer the proof to Appendix~\ref{appendix}.

\section{Experimental results}
\begin{figure*}[t]
    \centering
    \begin{subfigure}[c]{0.32\linewidth}
        \includegraphics[width=\linewidth, trim={0cm 0.2cm 0cm 0.2cm}, clip]{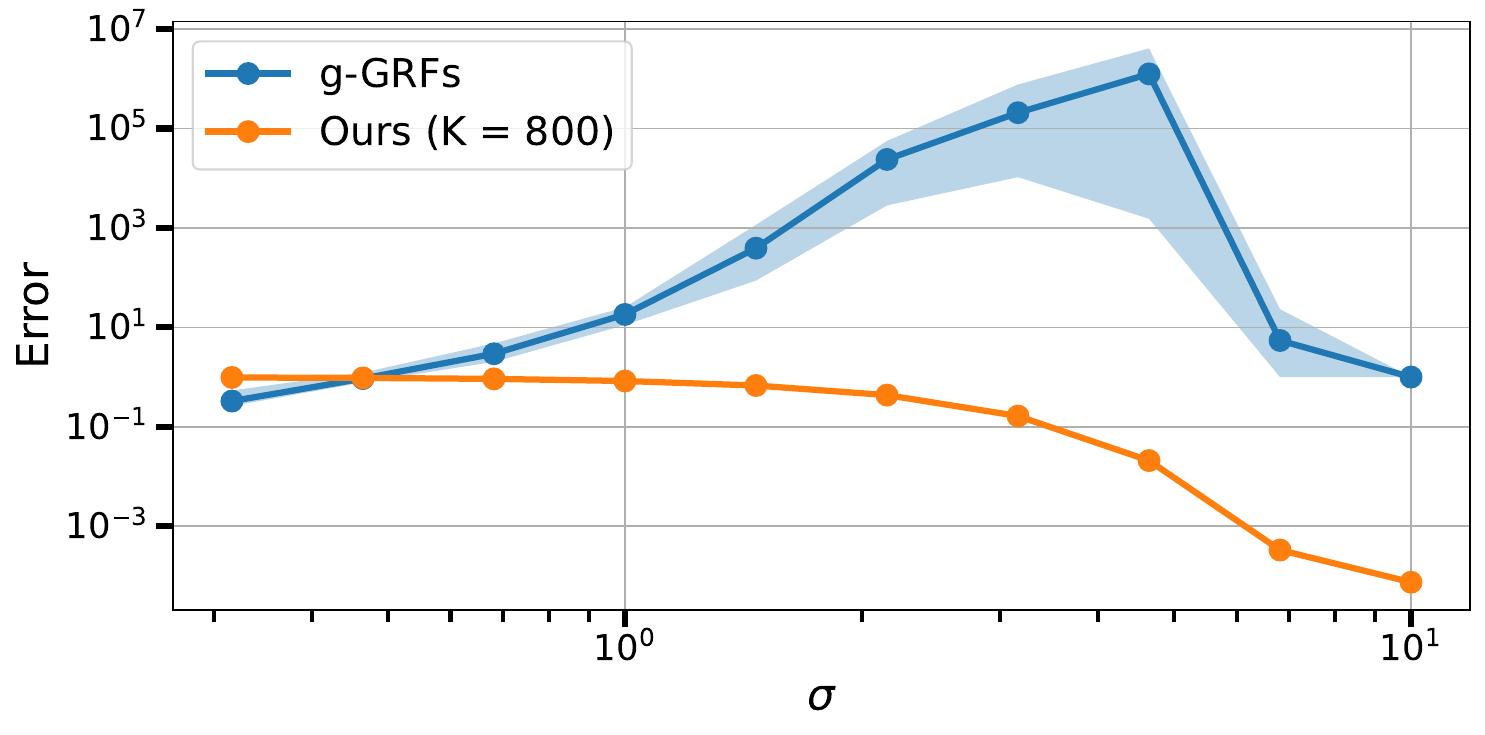}
        \caption{}
        \label{fig:errs}
    \end{subfigure}
    \begin{subfigure}[c]{0.32\linewidth}
        \includegraphics[width=\linewidth, trim={0cm 0.2cm 0cm 0.2cm}, clip]{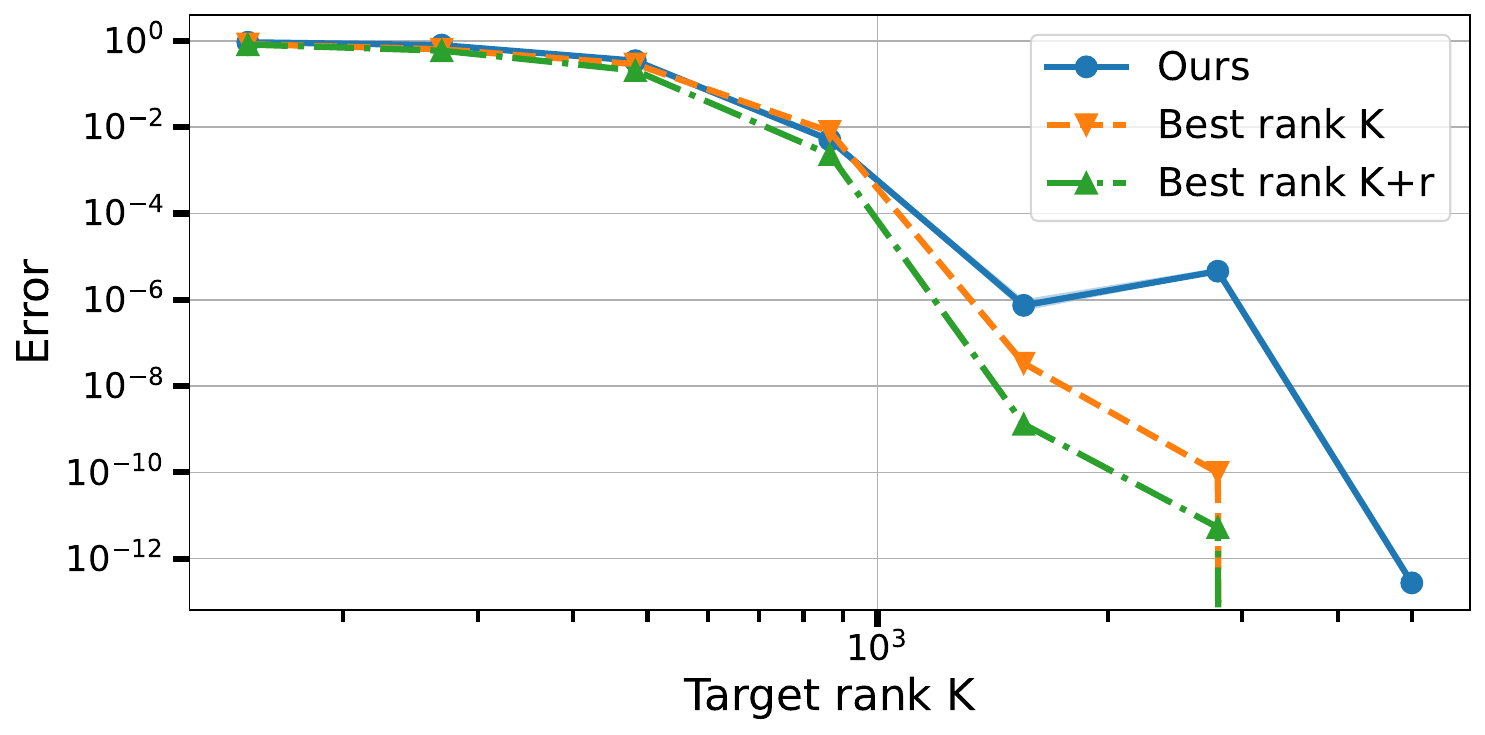}
        \caption{}
        \label{fig:KExpSR}
    \end{subfigure}
    \begin{subfigure}[c]{0.32\linewidth}
        \includegraphics[width=\linewidth, trim={0cm 0.2cm 0cm 0.2cm}, clip]{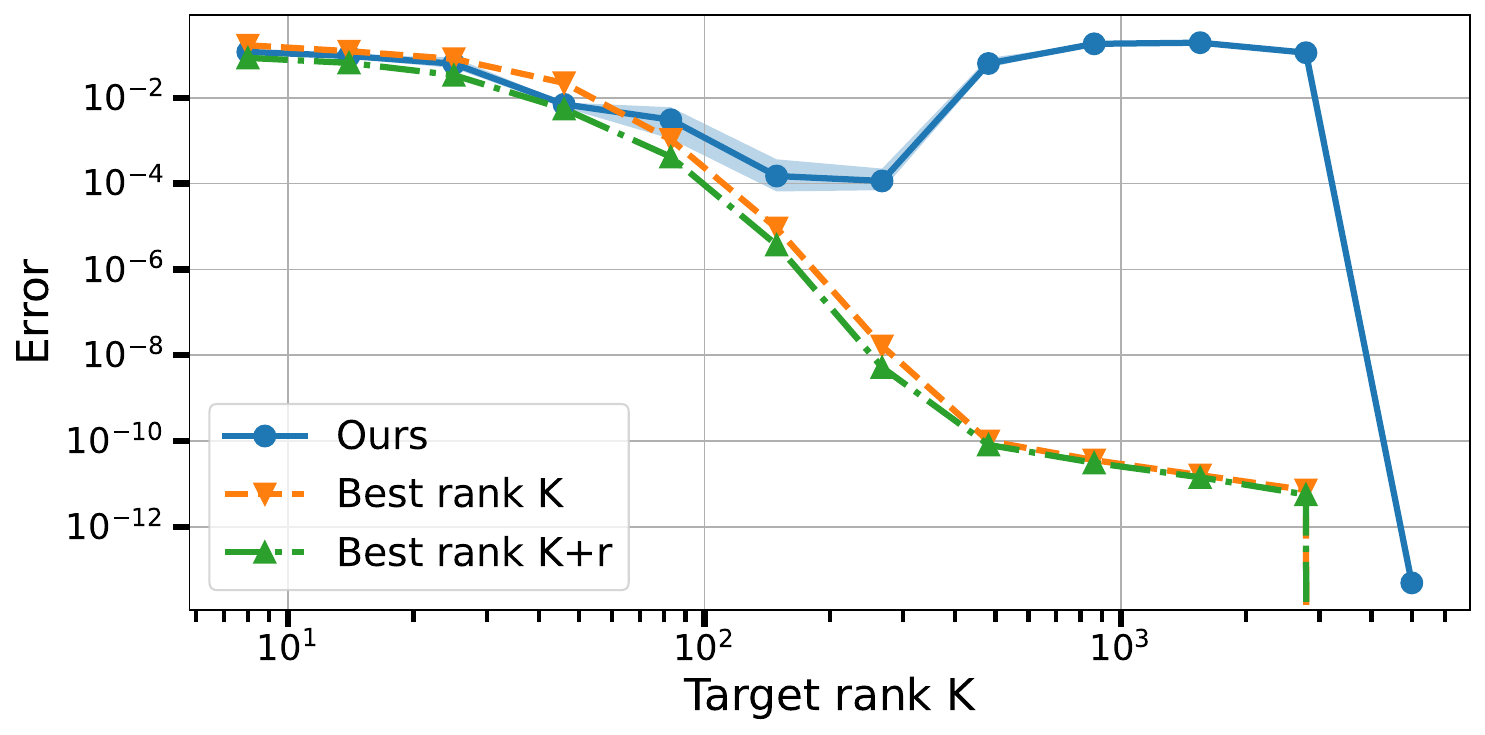}
        \caption{}
        \label{fig:KExpC}
    \end{subfigure}
    \caption{Average relative spectral norm of the error w.r.t. the ground truth kernel, $\| \bkernel - \tilde{\bkernel} \| / \| \bkernel \|$. Shaded areas represent min. and max. values, over 5 trials. Left (a): As a function of $\sigma$, and compared to g-GRFs. Center and right (b-c): As a function of the target rank $K$, for $\sigma=5$, and compared to the best rank-$K$ and rank-$(K+r)$ approximations. Center (b): Swiss-Roll graph. Right (c): Community graph.}
    \label{fig:main}
\end{figure*}
\subsection{Studying kernels with different bandwidths:}
We compare our method to the \emph{general graph random features} (g-GRFs) method\footnote{We use stopping probability $p_{\rm halt} = 0.1$, and number of random walkers $m=8$, which are standard parameters for g-GRFs.} \cite{reid2023general}. As a first experiment, we compare the accuracy of the kernel approximates on a randomly generated Swiss-Roll graph with 5\,000 vertices. Both methods are tested on 10 diffusion kernels $\bkernel = \operatorname{exp}(-\sigma^2 \bs L)$, which controls the bandwidth of the ground-truth kernels (larger $\sigma$ yields stronger spectral localization). Fig. \ref{fig:errs} shows the relative approximation error in these settings\footnote{The computation time is not influenced by $\sigma$. For this experiment, g-GRFs execute on average in $13.57\pm 0.56$ s, and our method in $11.88\pm 0.52$ s.}, for target rank $K=800$ and oversampling parameter $r=K/10=80$. In all experiments, we use $M=30$ for $p_h$ and $M=60$ for $p_\chi$.

Fig.~\ref{fig:errs} highlights that our method is particularly effective for spectrally localized kernels with narrow bandwidths, \ie kernels for which only a small number of graph Laplacian eigenvectors contribute significantly, and a low-rank approximation is appropriate. Such narrow spectral bandwidth implies spatially widespread interactions, highlighting our methods ability to capture global geometric properties. In contrast, g-GRFs are better suited to kernels with wide spectral bandwidths, corresponding to spatially localized behavior, and struggle to capture long-range dependencies. Consequently, our proposed approach complements g-GRFs by effectively approximating spectrally localized kernels.

\subsection{Effect of target rank K:}

We now examine the influence of the parameter $K$, for a randomly generated Swiss-Roll graph with 5\,000 vertices and the diffusion kernel with $\sigma = 5$. We choose $r=\max(K/10,15)$ as oversampling parameter. We compare the results of our method with that of the best rank-$K$ approximation, $\bkernel^{(K)}$ --- for reminder, our method is constructed to achieve a similar error. Since our method effectively yields a rank-$(K+r)$ approximation, we also compare it to the best rank-$(K+r)$ approximation, $\bkernel^{(K+r)}$, which yields a lower-bound on the achievable error. Fig.~\ref{fig:KExpSR} shows the average relative approximation error for these settings.

As expected, the error decreases as the approximation rank increases. We see that our algorithm performs well on the Swiss-Roll graph, closely matching the best rank-$K$ approximation error up to $K \approx 1\,000$. For values of $K$ greater than 1\,000, our method is not as accurate as the optimal rank-$K$ approximation. At this stage, the error stabilizes around $10^{-6}$ (which is perfectly acceptable for most applications). A possible explanation is that small errors in the Gram-Schmidt orthogonalization (which increase with $K$) affect the final error. The error drops again to machine level precision when $K+r \geq N$, where $\bs Q = \bs I$ and the range finding is perfect.

We now repeat the experiment but for a randomly generated Community graph with 5\,000 nodes and 8 strongly connected communities weakly interconnected to each other. Fig.~\ref{fig:KExpC} shows the results for this graph.

As one can see, our method fails to match the approximation quality of the best rank-$K$ approximation for values greater than $K\approx 80$. Moreover, the error increases back to $\approx 10^{-1}$ when reaching $K \approx 500$. We surmise that the explanation lies in the structural difference between the Swiss-Roll and Community graphs. For the Swiss-Roll graph, the eigenvalues of the Laplacian are evenly spread out, while the first few eigenvalues of the Community graph are well separated from the others. After $K \approx 500$, the eigenvalues are very densely packed, such that the eigenvalue count and the estimation of $\lambda_K$ become inaccurate. The spectral cut-off applied with $p_\chi(\bs L)$ is not sharp enough in these densely packed areas, such that a significant amount of undesirable frequencies contaminate the most important ones. This reveals that the performance of the algorithm depends strongly on the eigenvalue distribution and the chosen approximation rank.

\subsection{Experimental complexity:}

We also time our experiments on the Swiss-Roll graph, for different values of $N$ and $K$. In practice, we observe that the two most time consuming steps are the estimation of $\lambda_K$ and the filtering (the computation of $p_\chi(\bs L) \bs G$ and of $p_h(\bs L) \bs Q$). Fig.~\ref{fig:Times} shows the time taken by both these steps\footnote{Computing $p_\chi(\bs L) \bs G$ takes roughly twice as much time as computing $p_h(\bs L) \bs Q$, since we chose polynomials of degree $60$ and $30$ respectively.} and compares them with the asymptotic complexities derived in Sec. \ref{sec:Complexity}. It also compares the total time taken by our algorithm with the time taken to compute the ground truth kernel $\bkernel$ and the best rank-$K$ approximation $\bkernel^{(K)}$ through explicit eigendecomposition.
\begin{figure}[h]
    \centering
    \begin{subfigure}[c]{0.49\linewidth}
        \includegraphics[width=\linewidth, trim={0cm 0cm 0cm 0.2cm}, clip]{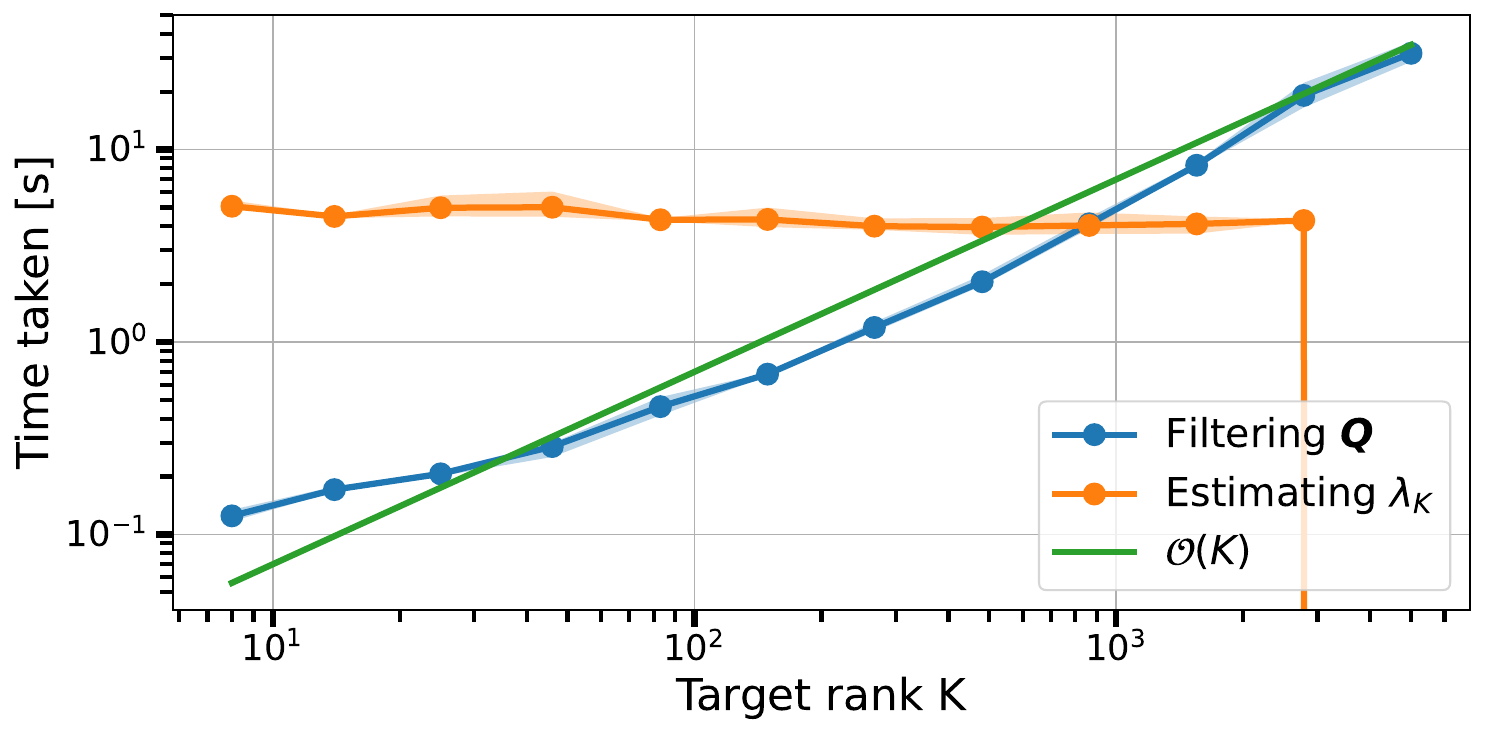}
    \end{subfigure}
    \begin{subfigure}[c]{0.49\linewidth}
        \includegraphics[width=\linewidth, trim={0cm 0cm 0cm 0.2cm}, clip]{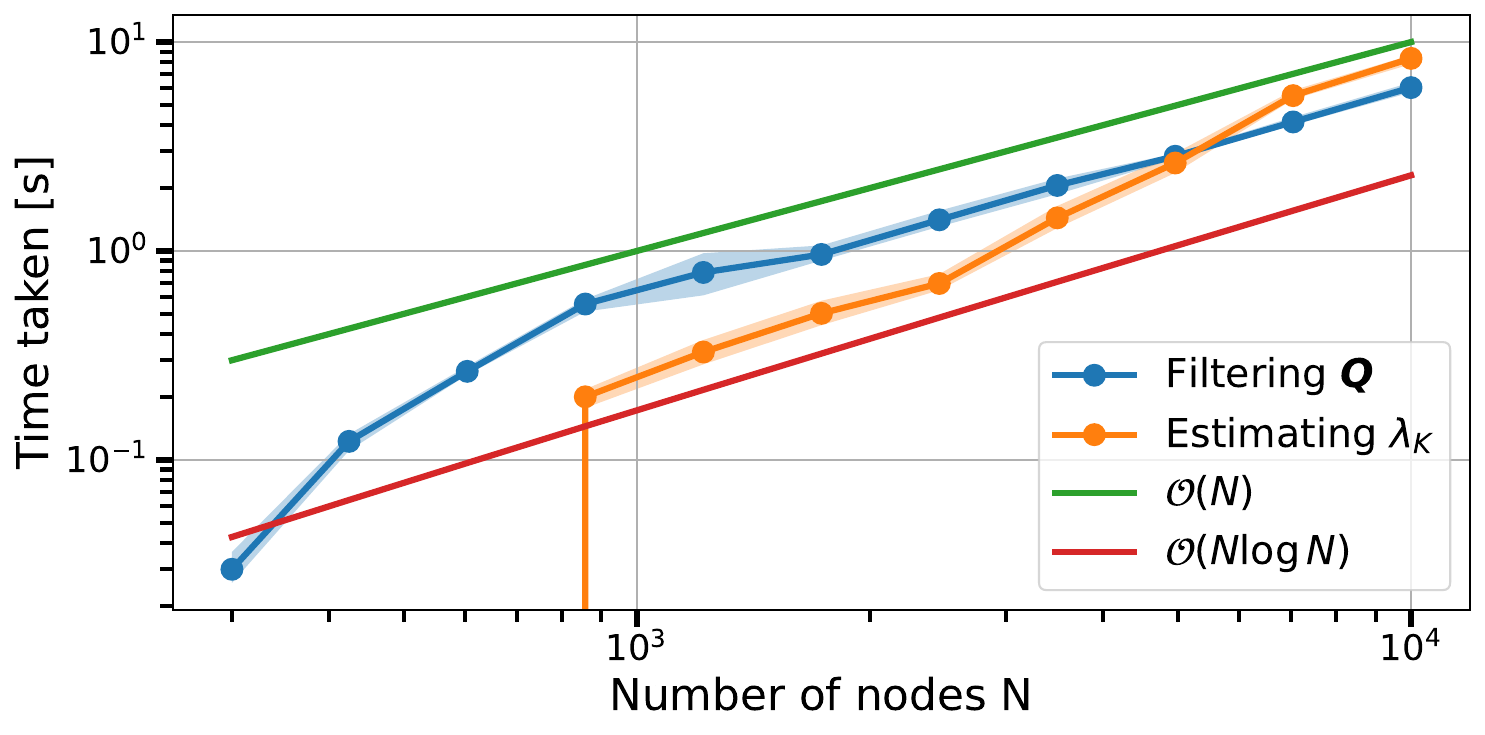}
    \end{subfigure}
    \begin{subfigure}[c]{0.49\linewidth}
        \includegraphics[width=\linewidth, trim={0cm -0.5cm 0cm 0cm}, clip]{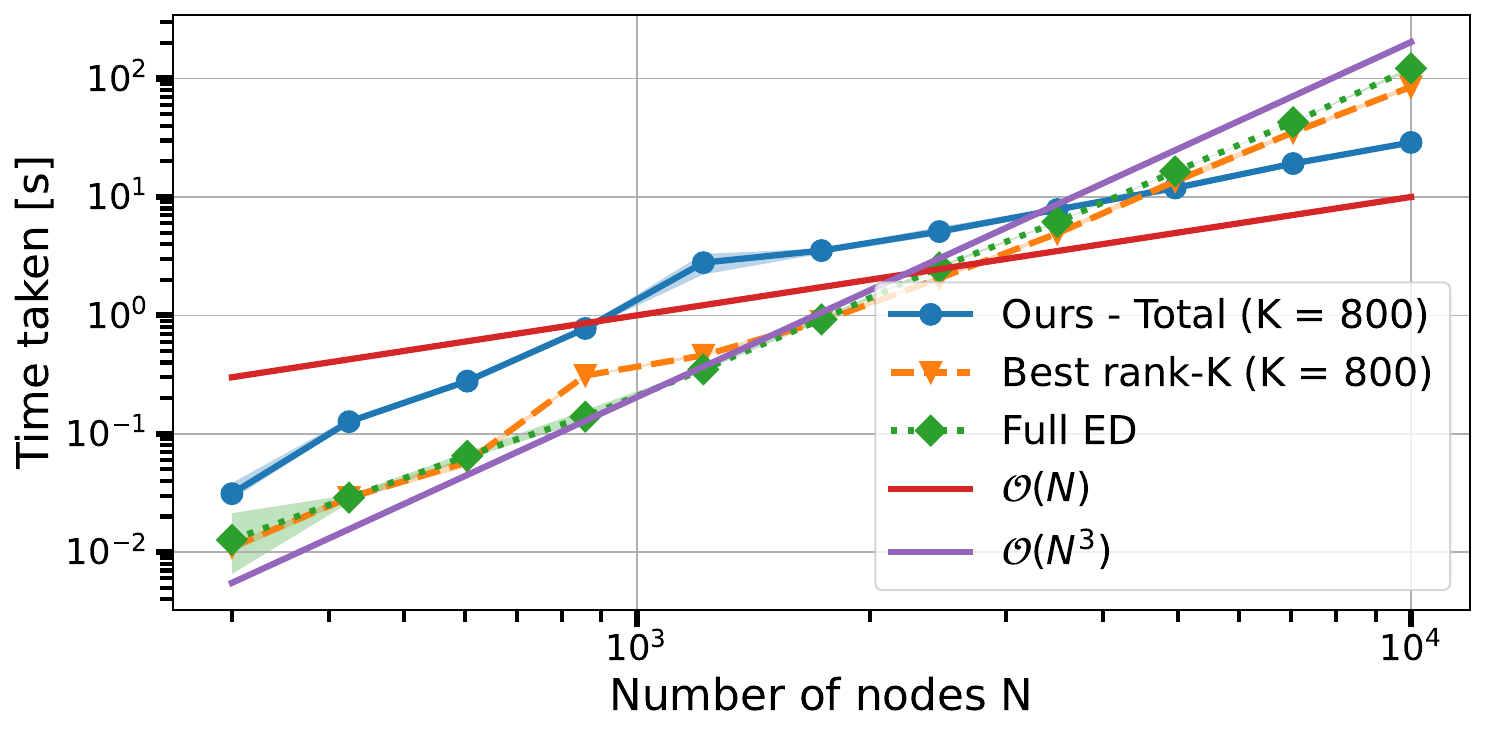}
    \end{subfigure}
    \caption{Swiss-Roll graph. Top left: average computation time as a function of $K$, for $N=5\,000$. Top right, bottom: average computation time as a function of $N$, for $K=800$. Shaded areas represent minimum and maximum values, over 5 trials.}
    \label{fig:Times}
\end{figure}

We observe that the time complexity experimentally matches that derived in Sec. \ref{sec:Complexity}, with the estimation of $\lambda_K$ independent of $K$ (except for $K \geq N$, where $\lambda_K$ is not needed), the filtering growing in $\mathcal{O}(K)$ for fixed $N$ (top-left) and in $\mathcal{O}(N)$ for fixed $K$ (top-right). The estimation of $\lambda_K$ seems to be growing only slightly faster than the predicted $\mathcal{O}(N \log N)$ (top-right), and the total time effectively grows in $\mathcal{O}(N)$ for fixed $K$ (and $K \leq N$). Our method takes slightly more time than exact decompositions for small $N$, but the complexity of those is in $\mathcal{O}(N^3)$ (as confirmed by our experiments --- bottom), which is why our method is faster for large $N$. These results indicate that we can hope to achieve significant computational gains for very large graphs.

\section{Conclusion}
We presented a new method for constructing random graph node embeddings that yield accurate low-rank approximations to specified graph kernels. The approach is especially effective for spectrally localized kernels, which are difficult to approximate with existing techniques. Its computational efficiency further enables kernel-based learning on large-scale graphs.

\appendix

\section{Proof of Proposition \ref{prop:main}:}
\label{appendix}
In this section we prove (\ref{eq:ERBound}) of Proposition \ref{prop:main}. Our arguments are largely inspired by the proof of \cite[Theorem 9.1]{halko2011finding}.
Before proving (\ref{eq:ERBound}), let us introduce some useful Lemmas directly originating from \cite{halko2011finding}.
\setcounter{theorem}{0}
\begin{lemma}[\hspace{-0.15cm} \cite{halko2011finding}, Proposition 8.1]
    \label{lem:ConjRule}
    Let $\bs M \succcurlyeq \bs 0$, and $\bs N \succcurlyeq M$. Then, for any real matrix $\bs A$,
    \begin{align*}
        \bs A^\top \bs M \bs A \succcurlyeq \bs 0 \quad \text{and} \quad \bs A^\top \bs N \bs A \succcurlyeq \bs A^\top \bs M \bs A.
    \end{align*}
\end{lemma}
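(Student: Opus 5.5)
The two claims follow directly from the quadratic-form definition of the Loewner order, so I would argue directly rather than invoke any earlier result. Throughout I use that, for symmetric $\bs S$, $\bs S \succcurlyeq \bs 0$ means $\bs x^\top \bs S \bs x \geq 0$ for every real vector $\bs x$ of compatible size. I read $\bs N \succcurlyeq \bs M$ as $\bs N - \bs M \succcurlyeq \bs 0$; the statement writes ``$\bs N \succcurlyeq M$'', which is evidently a typo. The matrix $\bs A$ may be rectangular, say $p\times q$ with $\bs M,\bs N$ of size $p\times p$. Then $\bs A^\top \bs M \bs A$ is $q\times q$, and it is symmetric because $\bs M$ is.

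For the first claim, fix any $\bs x\in\bb R^q$ and set $\bs y := \bs A\bs x \in \bb R^p$. Then
\begin{equation*}
\bs x^\top (\bs A^\top \bs M \bs A)\bs x = (\bs A\bs x)^\top \bs M (\bs A\bs x) = \bs y^\top \bs M \bs y \geq 0,
\end{equation*}
where the inequality is the hypothesis $\bs M \succcurlyeq \bs 0$. Since $\bs x$ is arbitrary, $\bs A^\top \bs M \bs A \succcurlyeq \bs 0$.

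For the second claim, I apply the first claim with the psd matrix $\bs N - \bs M$ in place of $\bs M$. This gives $\bs A^\top(\bs N-\bs M)\bs A \succcurlyeq \bs 0$. By linearity of conjugation,
\begin{equation*}
\bs A^\top(\bs N-\bs M)\bs A = \bs A^\top\bs N\bs A - \bs A^\top\bs M\bs A,
\end{equation*}
and this is exactly $\bs A^\top \bs N \bs A \succcurlyeq \bs A^\top \bs M \bs A$.

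There is no substantive obstacle. The only points needing care are the symmetry of the conjugated matrices, so that the Loewner order is meaningful, and the fact that the argument holds for arbitrary, possibly non-square, $\bs A$. The latter matters because the lemma will later be applied to rectangular blocks of $\bs V^\top \bs G$ in the range-finding analysis.
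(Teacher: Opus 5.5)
Your proof is correct and is essentially the standard argument behind Halko et al.'s Proposition 8.1, which the paper only cites and does not prove: you use $\bs x^\top \bs A^\top \bs M \bs A \bs x = (\bs A\bs x)^\top \bs M (\bs A\bs x) \geq 0$, then apply this to $\bs N - \bs M$. One minor aside: the paper never conjugates by blocks of $\bs V^\top \bs G$. It applies the lemma with the square diagonal matrix $h^{\frac{1}{2}}(\bs \Lambda)$, and implicitly with the rectangular $\bs A = \bs F^\top$ to obtain $\bs F(\bs I+\bs F^\top\bs F)^{-1}\bs F^\top \succcurlyeq \bs 0$, and your general $\bs A$ covers both uses.
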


\begin{lemma}[\hspace{-0.15cm} \cite{halko2011finding}, Proposition 8.2]
    \label{lem:InvPert}
    Let $\bs M \succcurlyeq \bs 0$. Then,
    \begin{align*}
        \bs I - (\bs I + \bs M)^{-1} \preccurlyeq \bs M
    \end{align*}
\end{lemma}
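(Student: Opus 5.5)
The plan is to reduce the matrix inequality to a scalar inequality on eigenvalues, using the spectral theorem. Both sides of the claimed relation $\bs I - (\bs I + \bs M)^{-1} \preccurlyeq \bs M$ are functions of the same symmetric matrix $\bs M$, so they are simultaneously diagonalizable. This makes a direct eigenvalue argument the natural route, rather than invoking Lemma~\ref{lem:ConjRule}.

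First, since $\bs M \succcurlyeq \bs 0$ is symmetric, we write $\bs M = \bs U \bs D \bs U^\top$ with $\bs U$ orthogonal and $\bs D = \diag(d_1,\ldots,d_n)$, where every $d_i \geq 0$. Then $\bs I + \bs M = \bs U(\bs I + \bs D)\bs U^\top$ has eigenvalues $1 + d_i \geq 1 > 0$. Hence it is invertible, with $(\bs I + \bs M)^{-1} = \bs U (\bs I + \bs D)^{-1} \bs U^\top$. Consequently
\begin{equation*}
\bs M - \big(\bs I - (\bs I + \bs M)^{-1}\big) = \bs U \,\diag\Big(d_i - \frac{d_i}{1+d_i}\Big)\, \bs U^\top = \bs U\, \diag\Big(\frac{d_i^2}{1+d_i}\Big)\, \bs U^\top .
\end{equation*}

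Second, each diagonal entry $d_i^2/(1+d_i)$ is nonnegative because $d_i \geq 0$. The matrix above is therefore an orthogonal conjugation of a nonnegative diagonal matrix, hence psd. This is exactly the claim $\bs I - (\bs I + \bs M)^{-1} \preccurlyeq \bs M$. Equivalently, one can note that $\bs U^\top(\cdot)\bs U$ preserves the Loewner order, which is a special case of Lemma~\ref{lem:ConjRule}.

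There is no real obstacle here. The only points needing care are the invertibility of $\bs I + \bs M$, which follows from $\bs M \succcurlyeq \bs 0$, and the fact that both matrices share the eigenbasis $\bs U$, so the Loewner comparison reduces to the scalar inequality $d/(1+d) \leq d$ for $d \geq 0$.
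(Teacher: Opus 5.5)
Your proof is correct. Note that the paper does not prove this lemma itself: it imports it from \cite{halko2011finding} (Proposition 8.2). The argument there goes through the psd square root $\bs R = \bs M^{1/2}$ and the chain $\bs I - (\bs I + \bs R^2)^{-1} = \bs R(\bs I+\bs R^2)^{-1}\bs R \preccurlyeq \bs R^2 = \bs M$, where the last step is the conjugation rule (Lemma~\ref{lem:ConjRule}) applied to $(\bs I+\bs R^2)^{-1}\preccurlyeq \bs I$.

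You instead diagonalize $\bs M$ once and reduce everything to the scalar inequality $d/(1+d)\leq d$ on $[0,\infty)$. Your route is more elementary and self-contained, and it yields the exact gap $\bs M - \big(\bs I - (\bs I+\bs M)^{-1}\big) = \bs M(\bs I+\bs M)^{-1}\bs M$. The cited route avoids eigenvalue bookkeeping and stays within the operator-inequality toolkit (the conjugation rule) that the appendix already uses. Since every matrix involved is a function of $\bs M$, the two arguments are equivalent in strength here.

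Your gap formula also gives a diagonalization-free proof. The identity $\bs M - \bs I + (\bs I+\bs M)^{-1} = \bs M(\bs I+\bs M)^{-1}\bs M$ is purely algebraic, because $\bs M$ commutes with $(\bs I+\bs M)^{-1}$. Its right-hand side is psd by Lemma~\ref{lem:ConjRule}, since $(\bs I+\bs M)^{-1}\succcurlyeq\bs 0$.
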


\begin{lemma}[\hspace{-0.15cm} \cite{halko2011finding}, Proposition 8.3]
    \label{lem:NormPart}
    Let $\bs M \succcurlyeq \bs 0$ be a real matrix partitioned as
    \begin{equation*}
        \bs M = \begin{bmatrix}
            \bs A & \bs B \\
            \bs B^\top & \bs C
        \end{bmatrix}.
    \end{equation*}
    Then, $$\| \bs M \| \leq \| \bs A \| + \| \bs C \|.$$
\end{lemma}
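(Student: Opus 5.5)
The plan is to use the variational characterization of the spectral norm of a psd matrix and to control the off-diagonal block $\bs B$ through a Cauchy--Schwarz inequality for the psd quadratic form defined by $\bs M$.

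\emph{Step 1: reduce to quadratic forms.} Since $\bs M \succcurlyeq \bs 0$, its eigenvalues are nonnegative. Hence $\|\bs M\| = \max_{\|\bs x\|=1} \bs x^\top \bs M \bs x$, and it suffices to bound $\bs x^\top \bs M \bs x$ for every unit vector $\bs x$. Partition $\bs x = (\bs x_1, \bs x_2)$ conformally with the blocks of $\bs M$ and write $t_1 = \|\bs x_1\|$, $t_2 = \|\bs x_2\|$, so that $t_1^2 + t_2^2 = 1$. Then
\begin{equation*}
\bs x^\top \bs M \bs x = \bs x_1^\top \bs A \bs x_1 + 2\, \bs x_1^\top \bs B \bs x_2 + \bs x_2^\top \bs C \bs x_2 .
\end{equation*}

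\emph{Step 2: bound the cross term (main step).} The bilinear form $(\bs u, \bs v) \mapsto \bs u^\top \bs M \bs v$ is symmetric and positive semidefinite, so it satisfies the Cauchy--Schwarz inequality $|\bs u^\top \bs M \bs v| \leq (\bs u^\top \bs M \bs u)^{1/2} (\bs v^\top \bs M \bs v)^{1/2}$. Apply this with $\bs u = (\bs x_1, \bs 0)$ and $\bs v = (\bs 0, \bs x_2)$. This gives
\begin{equation*}
|\bs x_1^\top \bs B \bs x_2| \leq (\bs x_1^\top \bs A \bs x_1)^{1/2} (\bs x_2^\top \bs C \bs x_2)^{1/2}.
\end{equation*}
The same choice of test vectors shows that $\bs A \succcurlyeq \bs 0$ and $\bs C \succcurlyeq \bs 0$. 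Consequently $a := \bs x_1^\top \bs A \bs x_1 \in [0, \|\bs A\| t_1^2]$ and $c := \bs x_2^\top \bs C \bs x_2 \in [0, \|\bs C\| t_2^2]$. This is the only non-mechanical point of the argument: the coupling block $\bs B$ must be controlled using the positivity of $\bs M$ itself, since $\|\bs B\|$ is not bounded by anything on the right-hand side.

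\emph{Step 3: conclude.} Combining Steps 1 and 2 yields
\begin{equation*}
\bs x^\top \bs M \bs x \leq \big(\sqrt a + \sqrt c\big)^2 \leq \big(\|\bs A\|^{1/2} t_1 + \|\bs C\|^{1/2} t_2\big)^2 .
\end{equation*}
Applying the Euclidean Cauchy--Schwarz inequality in $\bb R^2$ to the vectors $(\|\bs A\|^{1/2}, \|\bs C\|^{1/2})$ and $(t_1, t_2)$ bounds the right-hand side by $(\|\bs A\| + \|\bs C\|)(t_1^2 + t_2^2) = \|\bs A\| + \|\bs C\|$. Taking the maximum over unit $\bs x$ gives $\|\bs M\| \leq \|\bs A\| + \|\bs C\|$.
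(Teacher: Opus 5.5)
Your argument is correct and self-contained. The paper gives no proof of its own and only cites Halko et al. Their proof follows your route: the variational characterization of $\|\bs M\|$ for psd $\bs M$, control of the cross term, then Cauchy--Schwarz in $\bb R^2$. The one difference is in the cross term. They bound it by $2\|\bs B\|\,t_1 t_2$ and invoke the block Hadamard criterion $\|\bs B\|^2 \leq \|\bs A\|\,\|\bs C\|$ from Horn--Johnson. You instead get the needed (pointwise) bound directly from the Cauchy--Schwarz inequality for the semi-inner product induced by $\bs M$.
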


\begin{lemma}[\hspace{-0.15cm} \cite{halko2011finding}, Proposition 10.1]
    \label{lem:threeMat}
    Fix matrices $\bs S$, $\bs T$, and draw a standard Gaussian matrix $\bs G$ (\ie such that $G_{ij} \sim_\iid \mathcal{N}(0,1)$ for all $i,j$). Then,
    \begin{align*}
        \mathbb{E}_{\bs G} \|\bs{SGT} \| \le \| \bs S \| \|\bs T \|_F + \|\bs S\|_F \|\bs T\|.
    \end{align*}
\end{lemma}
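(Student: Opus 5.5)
The plan is to write the spectral norm as the supremum of a Gaussian process and to compare it, via the Sudakov--Fernique (Slepian--Gordon) comparison inequality, with a simpler process whose expected supremum we can compute. This is the route of \cite{halko2011finding}, and behind it of Gordon's and Chevet's inequalities. Concretely,
\begin{equation*}
\|\bs{SGT}\| = \sup_{\|\bs u\|=\|\bs v\|=1} \bs u^\top \bs S \bs G \bs T \bs v = \sup_{\bs u,\bs v} X_{\bs u,\bs v}, \qquad X_{\bs u,\bs v} := \scp{\bs S^\top \bs u (\bs T\bs v)^\top}{\bs G}_F .
\end{equation*}
This is a centered Gaussian process indexed by the product of two unit spheres. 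As comparison process we take
\begin{equation*}
Y_{\bs u,\bs v} := \|\bs S\|\,\scp{\bs g}{\bs T\bs v} + \|\bs T\|\,\scp{\bs h}{\bs S^\top \bs u},
\end{equation*}
where $\bs g,\bs h$ are independent standard Gaussian vectors of the appropriate sizes.

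The key step is the increment comparison $\bb E(X_{\bs u,\bs v}-X_{\bs u',\bs v'})^2 \le \bb E(Y_{\bs u,\bs v}-Y_{\bs u',\bs v'})^2$. Set $\bs a=\bs S^\top\bs u$, $\bs a'=\bs S^\top\bs u'$, $\bs b=\bs T\bs v$, $\bs b'=\bs T\bs v'$, $\alpha=\|\bs S\|$ and $\beta=\|\bs T\|$, so that $\|\bs a\|,\|\bs a'\|\le\alpha$ and $\|\bs b\|,\|\bs b'\|\le\beta$. The comparison then reduces to the deterministic inequality
\begin{equation*}
\|\bs a\bs b^\top-\bs a'\bs b'^\top\|_F^2 \le \alpha^2\|\bs b-\bs b'\|^2+\beta^2\|\bs a-\bs a'\|^2 .
\end{equation*}
I would expand both sides, using $\|\bs a\bs b^\top-\bs a'\bs b'^\top\|_F^2=\|\bs a\|^2\|\bs b\|^2+\|\bs a'\|^2\|\bs b'\|^2-2\scp{\bs a}{\bs a'}\scp{\bs b}{\bs b'}$. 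I would then try to organize the difference of the two sides into a sum of manifestly nonnegative terms, namely products of factors such as $(\alpha^2-\|\bs a\|^2)$, $(\beta^2-\|\bs b\|^2)$ and squared norms, using Cauchy--Schwarz on the cross terms. If that bookkeeping gets messy, a cleaner fallback is to first rescale so that $\alpha=\beta=1$ (the inequality is homogeneous), then symmetrize by splitting $\bs a\bs b^\top-\bs a'\bs b'^\top=\tfrac12(\bs a-\bs a')(\bs b+\bs b')^\top+\tfrac12(\bs a+\bs a')(\bs b-\bs b')^\top$ and bound the two pieces and their inner product. I expect this algebraic verification to be the main obstacle, since the cross term has no definite sign a priori.

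Given the increment inequality, Sudakov--Fernique yields $\bb E\sup X\le\bb E\sup Y$. The supremum of $Y$ separates over $\bs u$ and $\bs v$:
\begin{equation*}
\bb E\sup Y=\|\bs S\|\,\bb E\|\bs T^\top\bs g\|+\|\bs T\|\,\bb E\|\bs S\bs h\| .
\end{equation*}
Jensen's inequality then gives $\bb E\|\bs T^\top\bs g\|\le(\bb E\|\bs T^\top\bs g\|^2)^{1/2}=\|\bs T\|_F$, and likewise $\bb E\|\bs S\bs h\|\le\|\bs S\|_F$. This gives the claimed bound $\|\bs S\|\|\bs T\|_F+\|\bs S\|_F\|\bs T\|$. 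One technical point is that the index set is infinite; this is handled by applying the comparison on finite subsets and taking a monotone limit, since the process has continuous sample paths on a compact index set.
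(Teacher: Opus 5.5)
\textbf{There is a genuine gap, and it is at the step you flagged.} The deterministic inequality $\|\bs a\bs b^\top-\bs a'\bs b'^\top\|_F^2 \le \alpha^2\|\bs b-\bs b'\|^2+\beta^2\|\bs a-\bs a'\|^2$ is false when the norms of $\bs a,\bs a',\bs b,\bs b'$ can fall strictly below $\alpha,\beta$. No bookkeeping, with or without rescaling, can establish it.

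Take scalars $a=b=1$, $a'=b'=1/2$, $\alpha=\beta=1$. The left side is $(1-1/4)^2=9/16$, while the right side is $1/4+1/4=1/2$. This occurs inside the lemma's setting. Let $\bs S=\bs e_1\in\bb R^{2\times 1}$ and $\bs T=\bs e_1^\top\in\bb R^{1\times 2}$, with $\bs u=\bs v=\bs e_1$ and $\bs u'=\bs v'=(1/2,\sqrt3/2)^\top$. So the Sudakov--Fernique hypothesis genuinely fails for your pair $(X,Y)$.

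Your symmetrized fallback shows why. The cross term $\tfrac12(\|\bs a\|^2-\|\bs a'\|^2)(\|\bs b\|^2-\|\bs b'\|^2)$ is positive whenever both norms shrink together, and nothing on the right side absorbs it. The identity $2(1-\scp{\bs u}{\bs w})(1-\scp{\bs v}{\bs z})\ge 0$ from the proof of $\bb E\|\bs G\|\le\sqrt m+\sqrt n$ needs all four vectors on the spheres. Here that would mean $\|\bs S^\top\bs u\|=\|\bs S\|$ and $\|\bs T\bs v\|=\|\bs T\|$, which fails because $\bs S^\top$ maps the unit sphere onto an ellipsoid.

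\textbf{The fix is to use state-dependent weights in the comparison process:} $\hat Y_{\bs u,\bs v}:=\|\bs a\|\scp{\bs g}{\bs b}+\|\bs b\|\scp{\bs h}{\bs a}$, with $\bs a=\bs S^\top\bs u$ and $\bs b=\bs T\bs v$. Write $p=\|\bs a\|$, $p'=\|\bs a'\|$, $q=\|\bs b\|$, $q'=\|\bs b'\|$. A direct expansion gives
\begin{equation*}
\bb E(\hat Y_{\bs u,\bs v}-\hat Y_{\bs u',\bs v'})^2-\bb E(X_{\bs u,\bs v}-X_{\bs u',\bs v'})^2=(pq-p'q')^2+2\,(pp'-\scp{\bs a}{\bs a'})(qq'-\scp{\bs b}{\bs b'}),
\end{equation*}
which is nonnegative by Cauchy--Schwarz. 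Hence Sudakov--Fernique applies to $(X,\hat Y)$.

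Moreover, $\sup\hat Y\le\|\bs S\|\,\|\bs T^\top\bs g\|+\|\bs T\|\,\|\bs S\bs h\|$. Your Jensen step then gives $\|\bs S\|\|\bs T\|_F+\|\bs S\|_F\|\bs T\|$ unchanged. Your separation of the supremum and the finite-subset approximation are fine as written.

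An equivalent route is to add $pq\,\gamma$ to $X$, with $\gamma$ an independent standard Gaussian. This matches the variances to those of $\hat Y$, and $\bb E\hat X\hat X'-\bb E\hat Y\hat Y'=(pp'-\scp{\bs a}{\bs a'})(qq'-\scp{\bs b}{\bs b'})\ge0$, so Slepian's lemma applies. Jensen then gives $\bb E\sup X\le\bb E\sup\hat X$. The paper itself gives no proof of this lemma; it only cites Halko et al.
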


\begin{lemma}[\hspace{-0.15cm} \cite{halko2011finding}, Proposition 10.2]
    \label{lem:GaussPI}
    Draw a $K \times (K+r)$ standard Gaussian matrix $\bs G$ with $K \ge 2$ and $r \ge 2$. Then,
    \begin{align*}
        (\mathbb{E} \|\bs G^\dagger\|^2_F )^{1/2}  = \sqrt{\frac{K}{r-1}}
    \end{align*}
    and 
    \begin{align*}
        \mathbb{E} \|\bs G^\dagger\| \le e \frac{\sqrt{K+r}}{r}.
    \end{align*}
\end{lemma}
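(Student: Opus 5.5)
Since $\bs G$ is a $K\times(K+r)$ standard Gaussian matrix with $r\ge 2$, it has full row rank almost surely. Hence $\bs G^\dagger=\bs G^\top(\bs G\bs G^\top)^{-1}$, and the plan is to reduce both claims to statements about the $K\times K$ Wishart matrix $\bs W:=\bs G\bs G^\top\sim\mathcal W_K(\bs I,K+r)$. For the Frobenius identity, I would write
$$\|\bs G^\dagger\|_F^2=\operatorname{tr}\big((\bs G\bs G^\top)^{-1}\bs G\bs G^\top(\bs G\bs G^\top)^{-1}\big)=\operatorname{tr}(\bs W^{-1}).$$
Then I would invoke the classical first moment of the inverse Wishart distribution: for $n$ degrees of freedom and dimension $K$ with $n-K-1>0$, one has $\mathbb E\,\bs W^{-1}=\bs I/(n-K-1)$. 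If this needs to be justified from scratch, I would use the Bartlett decomposition, or note that $1/(\bs W^{-1})_{ii}$ is $\chi^2$ with $n-K+1$ degrees of freedom and use $\mathbb E[1/\chi^2_m]=1/(m-2)$. With $n=K+r$ this gives $\mathbb E\operatorname{tr}(\bs W^{-1})=K/(r-1)$, and taking square roots yields the first claim. The assumption $r\ge 2$ is exactly what makes this expectation finite.

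For the spectral norm, I would use $\|\bs G^\dagger\|=1/\sigma_{\min}(\bs G)=\lambda_{\min}(\bs W)^{-1/2}$, so the task becomes controlling the lower tail of the smallest singular value of a wide Gaussian matrix. The key intermediate result I would aim for is a Chen--Dongarra type tail bound:
$$\mathbb P\big(\|\bs G^\dagger\|\ge t\big)\le \frac{1}{\sqrt{2\pi(r+1)}}\Big(\frac{e\sqrt{K+r}}{r+1}\Big)^{r+1}t^{-(r+1)}.$$
I expect the derivation of this tail bound to be the main obstacle. The first approach I would try starts from the joint eigenvalue density of $\bs W$ and integrates out all eigenvalues except the smallest. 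This gives a bound on the density of $\lambda_{\min}$ near zero of the form $c_{K,r}\,s^{(r-1)/2}$. Integrating this density bound produces $\mathbb P(\lambda_{\min}\le s)\lesssim c'_{K,r}s^{(r+1)/2}$. Stirling's formula applied to the ratio of Gamma functions appearing in $c'_{K,r}$ should then produce the constant $(e\sqrt{K+r}/(r+1))^{r+1}/\sqrt{2\pi(r+1)}$.

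Given the tail bound, the remaining step is routine. I would write $\mathbb E\|\bs G^\dagger\|=\int_0^\infty\mathbb P(\|\bs G^\dagger\|\ge t)\,dt$ and split the integral at a threshold $E_0$ proportional to $e\sqrt{K+r}/(r+1)$. Below $E_0$ I would bound the probability by $1$, and above $E_0$ I would use the tail bound, whose integral equals $E_0^{-r}/r$ times the constant. Choosing $E_0$ to balance the two pieces gives a bound of the form $\frac{e\sqrt{K+r}}{r+1}\cdot\big(1+\tfrac1r\big)\cdot(\text{factor}\le 1)$. Since $\tfrac{1}{r+1}\big(1+\tfrac{1}{r}\big)=\tfrac1r$, this is at most $e\sqrt{K+r}/r$, which is the claimed inequality.
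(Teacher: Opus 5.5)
The paper gives no proof of this lemma (it is imported verbatim from \cite{halko2011finding}); your outline is correct and is the argument given there: $\|\bs G^\dagger\|_F^2=\operatorname{tr}\big((\bs G\bs G^\top)^{-1}\big)$ with the inverse-Wishart mean $\bs I/(r-1)$ gives the identity, and the Chen--Dongarra tail bound $\mathbb P(\|\bs G^\dagger\|\ge t)\le C\,t^{-(r+1)}$, integrated with the cutoff $E_0=C^{1/(r+1)}$ to get $(1+\tfrac1r)C^{1/(r+1)}\le e\sqrt{K+r}/r$, gives the inequality. Your sketch of the tail bound is also the standard Chen--Dongarra route and goes through: bound $\lambda_i-s\le\lambda_i$ in the joint eigenvalue density so the leftover integral is a $(K-1)$-dimensional Wishart normalization, integrate the resulting $s^{(r-1)/2}$ density bound, then apply Stirling to $\Gamma(r+2)$ and a Gamma-ratio bound to $\Gamma(\tfrac{K+r+1}{2})/\Gamma(\tfrac K2)$.
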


\textit{Proof of (\ref{eq:ERBound}).} Let us first introduce some simplifying notations. We define $\bs \Lambda_1 = \bs \Lambda_{:K}$, $\bs V_1 = \bs V_{:K}$, $\bs V_2 = \bs V_{K:}$ with $\bs V = (\bs V_1, \bs V_2)$, $\bs \Lambda_2 =\diag(\lambda_{K+1}, \ldots, \lambda_N)$, $\bs G_1=\bs V_1^\top \bs G$, $\bs G_2=\bs V_2^\top \bs G$, and $\bs P_{\bs A}$ as the projector onto the column space of $\bs A$ (for an orthogonal matrix $\bs Q$, $\bs P_{\bs Q} =  \bs{QQ}^\top$). Note that because standard Gaussian matrices are invariant under rotations, $\bs V^\top \bs G$ is a standard Gaussian matrix, and $\bs G_1$ and $\bs G_2$ are also standard Gaussian matrices.
Let us further introduce
\begin{align}
\bftilde{\bs A}
&=
\bs V^\top
h^{\frac{1}{2}}(\bs L)
=
\begin{bmatrix}
h^{\frac{1}{2}}(\bs \Lambda_1) \bs V_1^\top \\
h^{\frac{1}{2}}(\bs \Lambda_2) \bs V_2^\top
\end{bmatrix}, \label{eq:ATdef}\\
\tilde{\bs B}
&=
\bs V^\top
p_{\chi}(\bs L)\bs G
=
\begin{bmatrix}
p_{\chi}(\bs \Lambda_1) \bs G_1 \\
p_{\chi}(\bs \Lambda_2) \bs G_2
\end{bmatrix}. \label{eq:BTdef}
\end{align}
Recall that $\bs Q = \ortho(p_{\chi}(\bs L) \bs G)$, such that $\ts \ortho(\tilde{\bs B}) = \bs V^\top \bs Q$. Owing to the unitary invariance of the spectral norm and to the fact that $\bs V^\top \bs P_{\bs Q} \bs V = \bs P_{\bs V^\top\bs Q}$, we have the identity
\begin{align}
\ts \cl E_R := \|(\bs I-\bs P_{\bs Q})h^{\frac{1}{2}}(\bs L)\|
&=
\|\bs V^\top(\bs I-\bs P_{\bs Q})\bs V\bftilde{\bs A}\| \nonumber \\
&=
\|(\bs I-\bs P_{\bs V^\top\bs Q})\bftilde{\bs A}\| \nonumber \\
&=
\|(\bs I-\bs P_{\tilde{\bs B}})\bftilde{\bs A}\|. \label{eq:AuxEquiv}
\end{align}

In view of~\eqref{eq:AuxEquiv}, we can thus execute the proof for the auxiliary matrix $\tilde{\bs A}$, and the auxiliary projector $P_{\tilde{\bs B}}$.

\medskip

The main argument of the proof is based on matrix perturbation theory. We start with a matrix related to $\tilde{\bs B}$:
\begin{equation*}
\bs W
=
\begin{bmatrix}
p_{\chi}(\bs \Lambda_1) \bs G_1 \\
\bs 0
\end{bmatrix}.
\end{equation*}
The matrix $\bs W$ has the same range as a related matrix formed by ``flattening out'' the spectrum of the top block. Indeed, since (with probability 1) $p_{\chi}(\bs \Lambda_1) \bs G_1$ has full row rank,
\begin{equation*}
\operatorname{span}(\bs W)
=
\operatorname{span}
\begin{bmatrix}
\bs I \\
\bs 0
\end{bmatrix}.
\end{equation*}
This immediately yields
\begin{equation}
\label{eq:Wproj}
\bs P_{\bs W}
=
\begin{bmatrix}
\bs I & \bs 0 \\
\bs 0 & \bs 0
\end{bmatrix},
\qquad
\bs I-\bs P_{\bs W}
=
\begin{bmatrix}
\bs 0 & \bs 0 \\
\bs 0 & \bs I
\end{bmatrix}.
\end{equation}

In other words, the range of $\bs W$ aligns with the first $K$ coordinates, which span the same subspace as the first $K$ left singular vectors of $\bftilde{\bs A}$. Therefore, $\operatorname{span}(\bs W)$ captures the action of $\bftilde{\bs A}$, which is what we wanted from $\operatorname{span}(\tilde{\bs B})$.

We treat the matrix $\tilde{\bs B}$ as a perturbation of $\bs W$, and hope that their ranges are close to each other. Note that $\tilde{\bs B} \approx \bs W$ as $p_\chi(\bs \Lambda_2) \approx \bs 0$. To make the comparison rigorous, let us emulate the arguments outlined in the last paragraph. Referring to (\ref{eq:BTdef}), we flatten out the top block of $\tilde{\bs B}$ to obtain the matrix
\begin{equation}
\bs Z
=
\tilde{\bs B}\,
\bs G_1^\dagger
p_{\chi}(\bs \Lambda_1)^{-1}
=
\begin{bmatrix}
\bs I \\
\bs F
\end{bmatrix}, \label{eq:Zdef}
\end{equation}
where
\begin{equation*}
\bs F = p_{\chi}(\bs \Lambda_2) \bs G_2 \bs G_1^\dagger p_{\chi}(\bs \Lambda_1)^{-1}.
\end{equation*}

The construction~\eqref{eq:Zdef} ensures that $\operatorname{span}(\bs Z)\subseteq \operatorname{span}(\tilde{\bs B})$, therefore the error satisfies
\begin{equation*}
\|(\bs I-\bs P_{\tilde{\bs B}})\bftilde{\bs A}\|
\le
\|(\bs I-\bs P_{\bs Z})\bftilde{\bs A}\|.
\end{equation*}
Squaring this relation, we obtain
\begin{align}
\|(\bs I-\bs P_{\tilde{\bs B}})\bftilde{\bs A}\|^2
&\le
\|(\bs I-\bs P_{\bs Z})\bftilde{\bs A}\|^2 \nonumber \\
&=
\|\bftilde{\bs A}^\top(\bs I-\bs P_{\bs Z})\bftilde{\bs A}\| \nonumber \\
&=
\|h^{\frac{1}{2}}(\bs \Lambda)(\bs I-\bs P_{\bs Z})h^{\frac{1}{2}}(\bs \Lambda)\|. \label{eq:2sidedProj}
\end{align}
The last identity follows from (\ref{eq:ATdef}) ($\bftilde{\bs A}=h^{\frac{1}{2}}(\bs \Lambda)\bs V^\top$) and from the unitary invariance of the spectral norm.

To continue, we need a detailed representation of the projector $\bs I-\bs P_{\bs Z}$. The construction~\eqref{eq:Zdef} ensures that $\bs Z$ has full column rank, so we can express $\bs P_{\bs Z}$ explicitly as
\begin{align*}
\bs P_{\bs Z}
=
\bs Z(\bs Z^\top\bs Z)^{-1}\bs Z^\top
=
\begin{bmatrix}
\bs I \\
\bs F
\end{bmatrix}
(\bs I+\bs F^\top\bs F)^{-1}
\begin{bmatrix}
\bs I & \bs F^\top
\end{bmatrix}.
\end{align*}
Expanding this expression, we find that the complementary projector satisfies
\begin{equation*}
\bs I-\bs P_{\bs Z}
=
\begin{bmatrix}
\bs I-(\bs I+\bs F^\top\bs F)^{-1}
&
-(\bs I+\bs F^\top\bs F)^{-1}\bs F^\top
\\[0.5ex]
-\bs F(\bs I+\bs F^\top\bs F)^{-1}
&
\bs I-\bs F(\bs I+\bs F^\top\bs F)^{-1}\bs F^\top
\end{bmatrix}.
\end{equation*}

The partitioning here conforms with the partitioning of $\bs \Lambda$. This block matrix expression is less fearsome than it looks. Lemma \ref{lem:InvPert} shows that the top-left block verifies
\[
\bs I-(\bs I+\bs F^\top\bs F)^{-1} \preccurlyeq \bs F^\top\bs F.
\]
The bottom-right block satisfies
\[
\bs I-\bs F(\bs I+\bs F^\top\bs F)^{-1}\bs F^\top \preccurlyeq \bs I
\]
because $\bs F(\bs I+\bs F^\top\bs F)^{-1}\bs F^\top \succcurlyeq \bs 0$. We abbreviate the off-diagonal blocks with the symbol $\bs C=-(\bs I+\bs F^\top\bs F)^{-1}\bs F^\top$. In summary,
\[
\bs I-\bs P_{\bs Z}
\preccurlyeq
\begin{bmatrix}
\bs F^\top\bs F & \bs C \\
\bs C^\top & \bs I
\end{bmatrix}.
\]

This relation exposes the key structural properties of the projector. Compare this relation with the expression~\eqref{eq:Wproj} for the ``ideal'' projector $\bs I-\bs P_{\bs W}$.

We conjugate the last relation by $h^{\frac{1}{2}}(\bs \Lambda)$, and invoke Lemma~\ref{lem:ConjRule} to obtain
\begin{multline*}
h^{\frac{1}{2}}(\bs \Lambda)(\bs I-\bs P_{\bs Z})h^{\frac{1}{2}}(\bs \Lambda)
\preccurlyeq \\
\quad \begin{bmatrix}
h^{\frac{1}{2}}(\bs \Lambda_1)\bs F^\top\bs Fh^{\frac{1}{2}}(\bs \Lambda_1)
&
h^{\frac{1}{2}}(\bs \Lambda_1)\bs C h^{\frac{1}{2}}(\bs \Lambda_2)
\\[0.5ex]
h^{\frac{1}{2}}(\bs \Lambda_2)\bs C^\top h^{\frac{1}{2}}(\bs \Lambda_1)
&
h^{\frac{1}{2}}(\bs \Lambda_2)h^{\frac{1}{2}}(\bs \Lambda_2)
\end{bmatrix}.
\end{multline*}
Lemma \ref{lem:ConjRule} demonstrates that the matrix on the left-hand side is psd, so the matrix on the right-hand side is also psd. Lemma~\ref{lem:NormPart} results in the norm bound
\begin{align*}
\|h^{\frac{1}{2}}(\bs \Lambda)(\bs I-\bs P_{\bs Z})h^{\frac{1}{2}}(\bs \Lambda)\|
&\le
\|h^{\frac{1}{2}}(\bs \Lambda_1)\bs F^\top\bs Fh^{\frac{1}{2}}(\bs \Lambda_1)\| \\
& \qquad \quad + \|h^{\frac{1}{2}}(\bs \Lambda_2)h^{\frac{1}{2}}(\bs \Lambda_2)\| \\
&=
\|\bs Fh^{\frac{1}{2}}(\bs \Lambda_1)\|^2
+
\|h^{\frac{1}{2}}(\bs \Lambda_2)\|^2.
\end{align*}
Recall that $\bs F = p_{\chi}(\bs \Lambda_2) \bs G_2 \bs G_1^\dagger p_{\chi}(\bs \Lambda_1)^{-1}$, so
\begin{multline*}
\|h^{\frac{1}{2}}(\bs \Lambda)(\bs I-\bs P_{\bs Z})h^{\frac{1}{2}}(\bs \Lambda)\|
\le \\
\quad \|p_{\chi}(\bs \Lambda_2)\bs G_2\bs G_1^\dagger p_{\chi}(\bs \Lambda_1)^{-1} h^{\frac{1}{2}}(\bs \Lambda_1)\|^2 + \|h^{\frac{1}{2}}(\bs \Lambda_2)\|^2.
\end{multline*}
Inserting this into (\ref{eq:2sidedProj}) and using (\ref{eq:AuxEquiv}), we get
\begin{align*}
    \ts \cl E_R^2 &\leq \| h^{\frac{1}{2}}(\bs \Lambda_2) \|^2 + \| p_{\chi}(\bs \Lambda_2) \bs G_2 \bs G_1^\dagger p_{\chi}(\bs \Lambda_1)^{-1} h^{\frac{1}{2}}(\bs \Lambda_1)\|^2
\end{align*}
Now, Assumption \ref{ass:pchi} yields $\|p_{\chi}(\bs \Lambda_1)^{-1}\| \leq 2$, such that
\begin{align}
    \ts \cl E_R &\leq \| h^{\frac{1}{2}}(\bs \Lambda_2) \| + \| p_{\chi}(\bs \Lambda_2) \bs G_2 \bs G_1^\dagger \| \|p_{\chi}(\bs \Lambda_1)^{-1} \| \|h^{\frac{1}{2}}(\bs \Lambda_1)\| \nonumber \\
    &\leq \| h^{\frac{1}{2}}(\bs \Lambda_2) \| + 2 \| p_{\chi}(\bs \Lambda_2) \bs G_2 \bs G_1^\dagger \| \label{eq:FixedBound}
\end{align}
We are now ready to use Lemma \ref{lem:threeMat} and Lemma \ref{lem:GaussPI}, which yield
\begin{align*}
    &\ts \bb E (\| p_{\chi}(\bs \Lambda_2) \bs G_2 \bs G_1^\dagger \|) \nonumber \\
    &\ts \quad = \bb E_{\bs G_1} \bb E_{\bs G_2} (\| p_{\chi}(\bs \Lambda_2) \bs G_2 \bs G_1^\dagger \|) \\
    &\ts \quad \leq \bb E_{\bs G_1} (\| p_{\chi}(\bs \Lambda_2) \| \| \bs G_1^\dagger \|_F + \| p_{\chi}(\bs \Lambda_2) \|_F \| \bs G_1^\dagger \|) \\
    &\ts \quad \leq \| p_{\chi}(\bs \Lambda_2) \| \bb E_{\bs G_1} (\| \bs G_1^\dagger\|^2_F)^{1/2} + \| p_{\chi}(\bs \Lambda_2) \|_F\bb E_{\bs G_1} (\| \bs G_1^\dagger \|)  \\
    &\ts \quad \leq \sqrt{\frac{K}{r-1}} \| p_{\chi}(\bs \Lambda_2) \| +  e \frac{\sqrt{K + r}}{r} \| p_{\chi}(\bs \Lambda_2) \|_F.
\end{align*}
Inserting back into (\ref{eq:FixedBound}) concludes the proof. \qed

\end{document}